\newfont{\Bb}{msbm10 scaled\magstep0}
\newtheorem{theorem}{Theorem}[section]
\useunder{\uline}{\ul}{}
\title{Spatially-Enhanced Temporal Fusion Transformer: Interpretable Multi-Output Prediction for Parametric Dynamical Systems with Time-Varying Inputs}
\author{Shuwen Sun\thanks{Corresponding author, Max Planck Institute for Dynamics of Complex Technical Systems, Germany. {\tt ssun@mpi-magdeburg.mpg.de}},
Lihong Feng\thanks{Max Planck Institute for Dynamics of Complex Technical Systems, Germany. {\tt feng@mpi-magdeburg.mpg.de}} 
and Peter Benner\thanks{Max Planck Institute for Dynamics of Complex Technical Systems, Germany and Fakultät für Mathematik, Otto-von-Guericke-Universität Magdeburg, Germany. {\tt benner@mpi-magdeburg.mpg.de}}
}
\begin{document}

%\subjclass{37M05, 65P99, 65L70, 65L80}

%\keywords{Model order reduction; error estimation.}

\maketitle

\begin{abstract}
We explore the promising performance of a transformer model in predicting outputs of parametric dynamical systems with external time-varying input signals. The outputs of such systems vary not only with physical parameters but also with external time-varying input signals. Accurately catching the dynamics of such systems is challenging. We have adapted and extended an existing transformer model, called temporal fusion transformer (TFT), for single-output prediction to a multiple-output transformer, named as Spatially-Enhanced Temporal Fusion Transformer (SE-TFT), which is able to predict multiple output responses of these systems. The SE-TFT generalizes the interpretability of the original TFT model. The generalized interpretable attention weight matrix explores not only the temporal correlations in the sequence, but also the interactions between the multiple outputs, providing explanation for the spatial correlation in the output domain. This proposed SE-TFT accurately predicts the sequence of multiple outputs, regardless of the nonlinearity of the system and the dimensionality of the parameter space.
\end{abstract}

\section{Introduction}
With the increasing needs from various engineering fields, we are facing simulating more and more large-scale complex systems in the form of differential-algebraic equations (DAEs) or ordinary differential equations (ODEs) with a large number of degrees of freedoms (DOFs). Numerically solving such systems often takes too much time, especially when they need to be simulated repeatedly for many parameter instances or input-signal changes. In order to reduce the computational cost of simulating those large-scale systems, model order reduction (MOR) has been actively researched for more than 30 years for proposing surrogate models with much less DOFs~\cite{morAnt05, morBenetala21, morBenetalb21, morBenetalc21, morFelF95, morPilR90}. Consequently, such surrogate models can replace the original large-scale systems in many multi-query tasks to achieve fast computation. However, strongly nonlinear parametric systems with external time-varying input signals are still challenging for MOR. With the power of modern computers for processing large amounts of data, machine learning (ML) is being applied in computational science~\cite{morBarFM23, morBonCGetal24, morConGFetal23, morConGMetal23, morDuFPetal20, morFreDM21, morFreM22, morHeCFetal23, morDuaH24, morKasGH20, morKut21, morMau21}. The large-scale systems are being replaced with neural networks (NNs) as a new kind of surrogate models. Compared with traditional projection-based MOR methods~\cite{morAnt05, morBarF22, morBenetala21, morBenetalb21, morBenetalc21, morBenGW15, morFelF95, morPilR90} for surrogate modeling, ML models are non-intrusive and data-driven. Hence, they are efficient for developing surrogates for many complex mathematical models, for which the system matrices and the nonlinear vector resulting from spatial discretization, are hard to be explicitly extracted from simulation tools.

Many ML methods aim to accurately predict the whole solution vector, therefore autoencoders (AE) are often used to first compress the data of the numerical solution trajectories into a latent space with much lower dimension. Different data-driven methods, such as long short-term memory~(LSTM), dynamic mode decomposition~(DMD), sparse identification of non-linear dynamics~(SINDy), neural ordinary differential equations (NODEs), etc., are then used to learn the dynamics in the latent space~\cite{morBonCGetal24, mlCheRBetal18, morConGFetal23, morFreM22, morHeCFetal23, morDuaH24, morRicS, morSunFB26}. On the other hand, neural operators~\cite{mlliKAetal20b, mlliKAetal20a, mlluJPetal21} directly learn a mapping between the input function and the solution of partial differential equations (PDEs), so that they are independent of any discrete mesh used for numerically solving the PDEs.

Quantities of interests (QoIs) are usually a few scalar functions of the solution or state vector, which are sometimes sufficient for certain analyses. In system theory, QoIs are called the outputs of the dynamical systems. There are a few works focusing on predicting only the QoIs or outputs using ML methods, such as in~\cite{morFen23, MorSD21}, without employing data compression. In these studies, LSTM networks are used to predict parametric outputs that vary with external input signals. However, LSTM is known to suffer from issues with long-term predictions and slow inference~\cite{morFen23}. Moreover, the amount of the window data depends on the complexity of the problems. For some problems, the window must be taken to a bit larger for accurate prediction. The data in the window, nevertheless, need to be generated by simulating the original large-scale systems or by additional measurements.

The transformer models are proposed to overcome the difficulties of recurrent neural networks, such as LSTM, for long-term predictions. Many transformer models have been proposed for time series forecast, please see a recent survey~\cite{mlQinTZetal23} on various transformer models for different tasks, such as time-series prediction, classification, spatial-temporal prediction, etc. Most of the transformer models are applied to predict daily life activities, such as electricity consumption, traffic road occupancy rate, weather forecast, currency exchange rate, etc.~\cite{mlCirGYetal22, mlDroMC22, mlLimALetal21, mlLinKR21, mlLiuWWetal22, mlShaAMetal23, mlSheW22, mlZhaY23, mlZhoZPetal21, mlZhoMWetal22}. Some transformer models have been proposed for prediction based on numerical solution of large-scale dynamical systems via latent space dynamics learning or neural operator learning~\cite{mlCalKL24, mlGenZ22, mlHaoWS23, mlKoGKetal24, mlOvaKS24, morRicS}. In~\cite{mlGenZ22}, a transformer model was applied to construct a surrogate model of large physical dynamical systems, where a Koopman-based embeddings approach is proposed. The input of the model is the initial state and the trained model can predict the dynamics subsequently. In~\cite{morRicS}, only non-parametric dynamical systems are considered, and a transformer is used to learn the latent dynamics only in the time domain. To the best of our knowledge, few of those have yet been applied to predicting QoIs or outputs of large-scale dynamical systems with external inputs, in both the time and parameter domain. Note that in~\cite{mlWuG}, a small non-parametric dynamical system with three state variables describing the influenza-like illness symptoms is studied. The states of the system are predicted using a transformer model. However, neither parameters nor external inputs are considered in the system. A recent work using a transformer architecture for neural operator learning~\cite{mlHaoWS23} could also be applied for predicting QoIs. However, neural operator learning usually requires much more training data than other NNs, especially for tasks of predicting long-term sequences depending on multiple factors, e.g., parameters, external inputs, initial conditions, etc.

In order to predict time series dependent on static covariates, a priori known inputs, and observed inputs effectively, a transformer model: temporal fusion transformer (TFT) in~\cite{mlLimALetal21} is introduced. In this work, we propose to apply TFT to predict the time evolution of parametric outputs of dynamical systems with external input signals. It is shown in~\cite{mlLimALetal21} that TFT is accurate in long-term prediction of time series dependent on a complex mix of inputs, including time-invariant (static) covariates, known future inputs, and time series that are only observed in the past. TFT was used to predict the electricity usage, the traffic flow, etc., in a future time period~\cite{mlLimALetal21}. Translating these terminologies into the terms of system theory, we understand that the static covariates correspond to the time-independent physical/geometrical parameters, the known future inputs correspond to the time-varying external input signals, and the time-series that are only observed in the past are simply the outputs in the past time period. In summary, TFT should be able to predict outputs in both parameter and time domain (future time prediction), given the parameters, the input signals as well as the outputs in the past time period. In contrast to many existing autoregressive methods for time sequence prediction~\cite{mlCheRBetal18, morConGFetal23, morConGMetal23, morHeCFetal23, morMau21, morRicS}, which generate predictions step-by-step, TFT~\cite{mlLimALetal21} is able to do multi-horizon prediction in a single prediction pass. 

The current TFT model is limited to predicting a single QoI. To predict a different QoI, TFT must be re-trained. We propose a Spatially-Enhanced Temporal Fusion Transformer (SE-TFT) model, which aims to predict multiple QoIs with a single training session. A new masking scheme is proposed for the interpretable multi-head attention to illustrate the correlation between different outputs. In this sense, the spatial relations in the output domain are explored. The interpretability of the TFT is then naturally generalized by the SE-TFT, such that the resulting attention weight matrix in SE-TFT provides information on temporal and spatial interactions between features corresponding to different outputs. Note that several implementations of TFT~(for example, Pytorch Forecasting package~\cite{py_forecast}) have supported TFT for multiple targets predictions. However, these implementations use the same attention matrix as the original TFT, which has no interpretation of the correlation between the multiple targets (outputs). In contrast, our proposed method integrates spatio-temporal learning into the self-attention layer using a specially designed mask structure. As a result, the proposed SE-TFT not only captures spatio-temporal information but also enhances interpretability for multiple target predictions.
%Similar idea has been independently proposed in~\cite{mlGriWN21}, while position and variable encoding is applied in the long-range transformers for distinguishing time instants and the different outputs.

We have successfully applied SE-TFT to three parametric dynamical systems: The Lorenz-63 model parametrized with random initial conditions; the FitzHugh-Nagumo model with two physical parameters and a time-varying external input signal; a coupled electrochemical kinetics and diffusion model with two physical parameters and an external input signal changing with both the time and a parameter. The first model is used as a benchmark for chaotic dynamical systems. The second one has cubic nonlinearity. For the coupled electrochemical kinetics and diffusion model, all the equations are fully coupled, it is difficult to extract system matrices and nonlinear vectors that are necessary for projection-based intrusive MOR. The prediction results show that SE-TFT is very accurate for predicting multiple outputs in both the parameter and the time domain. After training, SE-TFT can accurately predict the outputs at any testing parameter samples in one step given only the solution at the initial time instance. The attention weight matrix for each testing case illustrates not only the temporal relationship within a single output sequence, but also the interactions between features corresponding to different outputs along the whole-time sequence, clearly showing the interpretability of SE-TFT. 

In the next section, we present the parametric dynamical systems under consideration, then the structure of TFT is introduced and is connected to the dynamical systems. Section~\ref{sec:MOTFT} demonstrates the framework of SE-TFT, the pre-processing, and the training procedure. Section~\ref{sec:pred_results} presents the prediction results of SE-TFT using the above mentioned three examples and the interpretability analysis of SE-TFT based on the numerical results. Section~\ref{sec:conclusion} concludes the article with some further discussions. 

\section{Parametric dynamical systems and structure of TFT}

\subsection{Parametric dynamical systems}
\label{sec:sys}
The parametric dynamical systems we consider are in the form of differential-algebraic equations (DAEs):
\begin{align}
	\label{eq:fom}
\frac{d}{dt} \boldsymbol{E}(\boldsymbol{\mu})\boldsymbol{x}(t, \boldsymbol{\mu}) &= F(\boldsymbol{x}(t, \boldsymbol{\mu}), \boldsymbol{\mu}) + \boldsymbol{B}(\boldsymbol{\mu}) \boldsymbol{u}(\boldsymbol{\mu}, t), \qquad \boldsymbol{x}(0, \boldsymbol{\mu}) = \boldsymbol{x}_{0}(\boldsymbol{\mu}), \nonumber \\
		\boldsymbol{y}(t, \boldsymbol{\mu}) &= \boldsymbol{C}(\boldsymbol{\mu}) \boldsymbol{x}(t, \boldsymbol{\mu}),
\end{align}
where $t \in [0, T]$ and $\boldsymbol{\mu}:=(\mu_1,\ldots,\mu_p)^T \in \mathcal{P} \subset \mathbb R^{p}$, $\mathcal{P}$ is the parameter domain. The unknown state vector $\boldsymbol{x}(\boldsymbol{\mu})\in \mathbb R^N$ and $\boldsymbol{E}(\boldsymbol{\mu}) \in \mathbb R^{N\times N}, \boldsymbol{B}(\boldsymbol{\mu}) \in \mathbb R^{N\times n_I}, \boldsymbol{C}(\boldsymbol{\mu}) \in \mathbb R^{n_o \times N}, \forall \boldsymbol{\mu} \in \mathcal P$, are the system matrices. The vector-valued $F: \mathbb R^N \times \mathcal P \mapsto \mathbb R^{N}$ is the nonlinear system operator and $\boldsymbol{u}(t, \boldsymbol{\mu}) \in \mathbb R^{n_I}$ is the vector of external inputs, which may also dependent on the parameter $\boldsymbol{\mu}$. The output response $\boldsymbol{y}(t, \boldsymbol{\mu}):=(y_1(t,\boldsymbol{\mu}), \ldots, y_{n_o}(t, \boldsymbol{\mu})) \in \mathbb R^{n_o}$ consists of the QoIs.
Such systems often arise from discretizing partial differential equations (PDEs) using numerical discretization schemes, or from some physical laws, for example, the modified nodal analysis (MNA) in circuit simulation. The number of DOFs $N$ is usually very large to reach high-resolution of the underlying physical process. Repeatedly solving the system in~(\ref{eq:fom}) at many samples of $\mu$ in a multi-query task is expensive. When $n_I>1$ and $n_o>1$, the system has multiple inputs and multiple outputs. Such problems are common in electrical or electromagnetic simulation~\cite{morCheFRetal23}. Our aim is to predict the output $\boldsymbol{y}(t, \boldsymbol{\mu})$ using SE-TFT, without knowledge of the system matrices and the expression of $F(\boldsymbol{x}(t, \boldsymbol{\mu}), \boldsymbol{\mu})$. In other words, we consider the system in~(\ref{eq:fom}) as a black box.

\subsection{Structure of TFT}
\label{sec:structure}
In this section, we introduce the general structure of TFT. Moreover, we adapt the input data and prediction value of TFT to the dynamical system in~\labelcref{eq:fom}. In particular, the covariates for TFT in~\cite{mlLimALetal21} are now referred to as the parameters $\mu_m, \, m=1,\ldots, n_p$. The main building blocks for TFT are the Gated Residual Network (GRN), the LSTM encoder-decoder and the temporal fusion decoder including a novel interpretable multi-head attention block. The overview of TFT is illustrated in Figure~\ref{fig:tft}. 
\begin{figure}
\centering
\includegraphics[width=140mm]{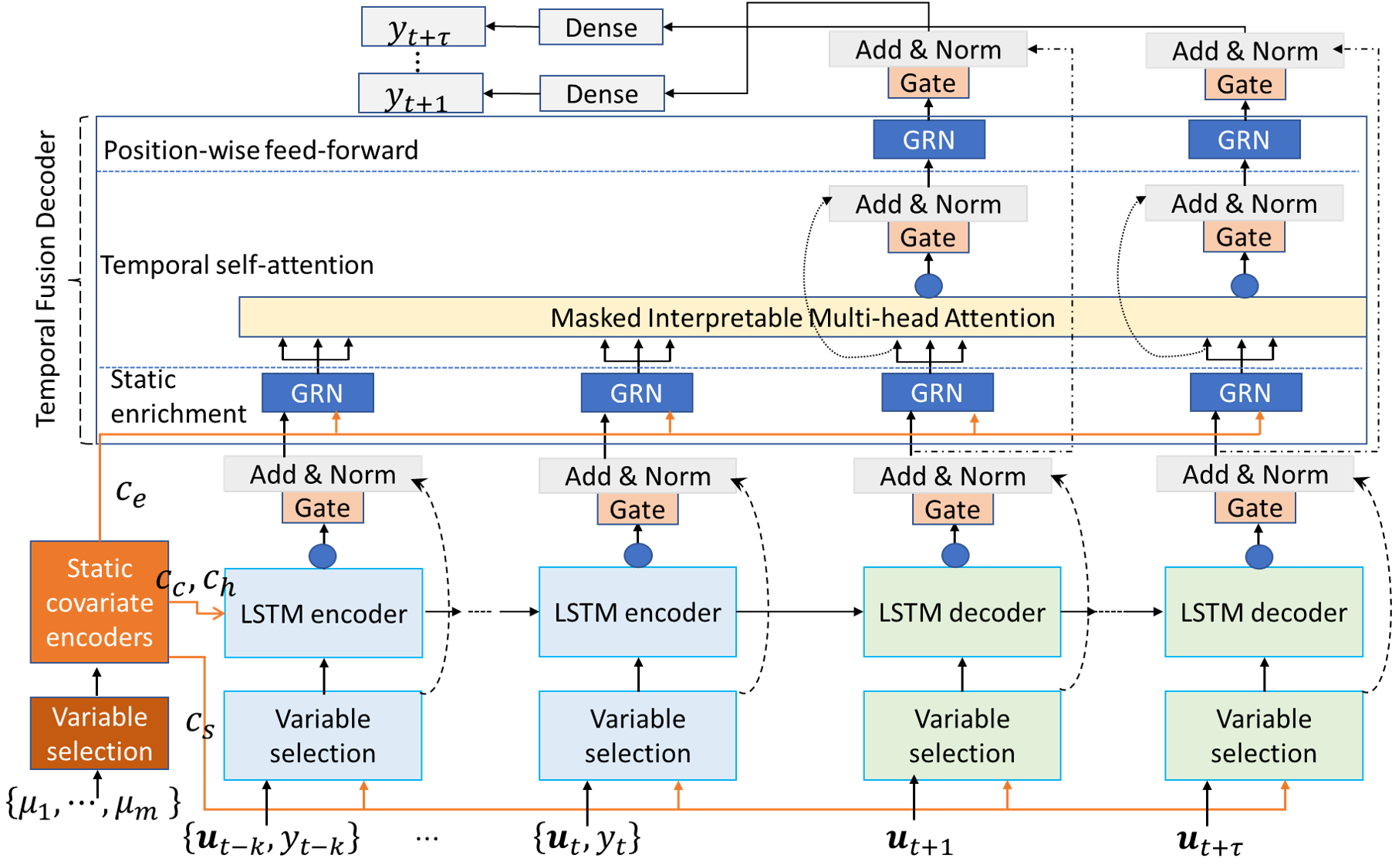}
\caption{The structure of TFT for parametric output prediction. The main structure is a copy of Fig.~2 in~\cite{mlLimALetal21}. Only the notation of the TFT input data, TFT prediction values and parameters (static metadata in~\cite{mlLimALetal21}) are different.}
\label{fig:tft}
\end{figure}
The inputs $\boldsymbol{u}_{t-k}, \ldots, \boldsymbol{u}_t$, and the outputs $y_{t-k},\ldots,y_t$, at the past time instances $t-k, \ldots, t$, and the input $\boldsymbol{u}_{t+1},\ldots, \boldsymbol{u}_{t+\tau}$, at future time instances are fed from the bottom into TFT. These variables then pass through a variable selection layer, a LSTM encoder-decoder network, a GRN layer, a multi-head attention layer, a position-wise feed-forward layer and a dense layer. The parameters $\mu_1, \ldots, \mu_p$, after being filtered by a variable selection network, are integrated by a static covariate encoder into different layers of TFT. Finally, TFT predicts the sequence of the output $y_{t+1}, \ldots, y_{t+\tau}$ at the future time instances and at the testing parameter samples. We briefly review each layer in TFT as below:
\begin{itemize}
\item The variable selection layer selects relevant input variables at each time step. It may also remove unnecessary noisy inputs which could negatively impact the performance of TFT. 
\item The LSTM is used to generate uniform temporal features $\boldsymbol{\phi}(t, -k), \ldots, \boldsymbol{\phi}(t, t+\tau)$ from the various input time series $\boldsymbol{u}_{t-k},\ldots, \boldsymbol{u}_t$, $y_{t-k},\ldots,y_t$, and $\boldsymbol{u}_{t+1},\ldots, \boldsymbol{u}_{t+\tau}$. 
The uniform temporal features then act as inputs of the temporal fusion decoder. 
\item The temporal fusion decoder consists of three blocks: the static enrichment block composed of GRNs, the multi-head attention layer with gating and the position-wise feed-forward layer that again is composed of GRNs. 
\item The gated multi-head attention contributes to the long-term prediction of TFT. The point-wise feed-forward network layer is an additional nonlinear processing of the outputs of the multi-head attention layer. 
\end{itemize} 
We refer to~\cite{mlLimALetal21} for more detailed and more exact explanations for each sub-network of TFT.

\section{Spatially-Enhanced Temporal Fusion Transformer}
\label{sec:MOTFT}
The original TFT predicts the time evolution of a scalar-valued function~(1D target). For predicting time sequences of multiple outputs $\boldsymbol{y}(t, \boldsymbol{\mu})$, TFT needs to be retrained upon change of each output. Here, we extend TFT to SE-TFT, which can predict multiple system outputs at once. The interpretable multi-head attention proposed in~\cite{mlLimALetal21} provides TFT with the ability of analyzing the temporal correlations between the features in a time sequence based on a single attention weight matrix. Rather than breaking the interpretability of TFT, SE-TFT extends its interpretability to multiple output cases, where the pairwise correlations between individual outputs, or in other words, the spatial correlations, at all time instances are also explored.

The structure of SE-TFT is introduced in two parts: reshaping the output data in the original TFT data set and a new masking scheme resulting in block-wise masked interpretable multi-head attention, which is a key part in SE-TFT. 

\subsection{Data preparation}
\label{sec:dataprep}
In the original TFT~\cite{mlLimALetal21}, the data set including the input signals and multiple outputs at $n_\mu$ parameter samples and $n_T$ time instances in the time interval $[0, T]$ is arranged as in~\labelcref{eq:tft_data}. The first column counts the number of data samples, the second column is the index number $i$ for the $i$-th parameter sample. The third column contains the time instances from $t_1$ to $t_{n_T}$ corresponding to each parameter sample. They are repeated $n_p$ times for the $n_p$ parameter samples. The $4$-th column to the $(n_I+3)$-th columns correspond to the samples of $n_I$ input signals at $n_T$ time instances and $n_p$ parameter samples. The next $p$ columns include the samples of $p$ parameters, each column corresponding to the samples of one parameter. Each sample is repeated for $n_T$ times, meaning that parameters remain fixed while the corresponding inputs and outputs evolve from $t_1$ to $t_{n_T}$. The last $n_o$ columns are the samples of $n_o$ outputs at $n_p$ samples of parameters and $n_T$ time instances. Data arrangement of these $n_o$ outputs is the main difference in the data preparation phase between the TFT and the SE-TFT. When training the TFT, columns corresponding to parameters, known inputs, outputs, time are detected and read into the training/validating/testing parts, where only one single column containing a single output can be imported and handled by the TFT.
\begin{equation}
\resizebox{\textwidth}{!}{$
\label{eq:tft_data}
\left(
\begin{array}{cccccccccc}
1 & 1 & t_1 & u_1(t_1,\boldsymbol{\mu}^1) & \ldots & u_{n_I}(t_1,\boldsymbol{\mu}^1) & \mu_1^1 \ldots \mu^1_p & y_1(t_1, \boldsymbol{\mu}^1) & \ldots & y_{n_o}(t_1, \boldsymbol{\mu}^1)\\
2 & 1 & t_2 & u_1(t_2, \boldsymbol{\mu}^1) & \ldots & u_{n_I}(t_2,\boldsymbol{\mu}^1) & \mu_1^1 \ldots \mu^1_p & y_1(t_2, \boldsymbol{\mu}^1) & \ldots & y_{n_o}(t_2, \boldsymbol{\mu}^1)\\
\vdots&\vdots & \vdots & \vdots& &\vdots& \vdots &\vdots & &\vdots \\
n_T & 1 & t_{n_T} & u_1(t_{n_T}, \boldsymbol{\mu}^1) & \ldots & u_{n_I}(t_{n_T},\boldsymbol{\mu}^1) & \mu_1^1 \ldots \mu^1_p & y_1(t_{n_T}, \boldsymbol{\mu}^1) & \ldots & y_{n_o}(t_{n_T}, \boldsymbol{\mu}^1)\\
n_T+1 & 2 & t_1 & u_1(t_1, \boldsymbol{\mu}^2)& \ldots & u_{n_I}(t_1,\boldsymbol{\mu}^2) & \mu_1^2  \ldots \mu^2_p & y_1(t_1, \boldsymbol{\mu}^2) & \ldots & y_{n_o}(t_1, \boldsymbol{\mu}^2)\\
n_T+2 & 2 & t_2 & u_1(t_2, \boldsymbol{\mu}^2) & \ldots & u_{n_I}(t_2,\boldsymbol{\mu}^2) &\mu_1^2 \ldots \mu^2_p & y_1(t_2, \boldsymbol{\mu}^2) & \ldots & y_{n_o}(t_2, \boldsymbol{\mu}^2)\\
\vdots&\vdots & \vdots &\vdots & &\vdots& \vdots &\vdots & &\vdots \\
2n_T & 2 & t_{n_T} & u_1(t_{n_T}, \boldsymbol{\mu}^2) & \ldots & u_{n_I}(t_{n_T},\boldsymbol{\mu}^2) & \mu_1^2 \ldots \mu^2_p & y_1(t_{n_T}, \boldsymbol{\mu}^2) & \ldots & y_{n_o}(t_{n_T}, \boldsymbol{\mu}^2)\\
\vdots&\vdots & \vdots &\vdots & &\vdots& \vdots & \vdots& &\vdots \\
(n_p-1)n_T+1& n_p& t_1& u_1(t_1, \boldsymbol{\mu}^{n_p}) & \ldots & u_{n_I}(t_1,\boldsymbol{\mu}^{n_p}) & \mu_1^{n_\mu} \ldots \mu^{n_\mu}_p& y_1(t_1, \boldsymbol{\mu}^{n_p})& \ldots& y_{n_o}(t_1, \boldsymbol{\mu}^{n_p}) \\
\vdots&\vdots & \vdots &\vdots & &\vdots& \vdots &\vdots & &\vdots \\
n_pn_T&n_p & t_{n_T}& u_1(t_{n_T}, \boldsymbol{\mu}^{n_p})& \ldots & u_{n_I}(t_{n_T},\boldsymbol{\mu}^{n_p}) & \mu_1^{n_\mu} \ldots \mu^{n_\mu}_p& y_1(t_{n_T}, \boldsymbol{\mu}^{n_p})& \ldots& y_{n_o}(t_{n_T}, \boldsymbol{\mu}^{n_p})
\end{array}\right),
$}
\end{equation}
where $\boldsymbol{\mu}^m:=(\mu_1^m, \ldots, \mu_p^m)^T, m=1,\ldots, n_p$. 

During the data preparation phase for SE-TFT, the columns of the multiple outputs in~\labelcref{eq:tft_data} are stacked into a single output column as shown in~\labelcref{eq:motft_data}. Because of the new alignment of the output column, each row of the matrix block on the left side of the first output column in~\labelcref{eq:tft_data} is duplicated $n_o$ times, resulting in a reshaped dataset in~\labelcref{eq:motft_data} with $n_p n_T n_o$ rows. From~\labelcref{eq:motft_data}, we see that every $n_o$ rows are data samples for $n_o$ different outputs at the same time instance. The new alignment mixes the dimension of time and the dimension of the output (spatial dimension), and leads to a spatial-temporal sequence rather than the temporal-only sequence in~\labelcref{eq:tft_data}. After SE-TFT is trained with this spatial-temporal data, a solution sequence containing all outputs at future time instances can be predicted in one step.
\begin{equation}
\label{eq:motft_data}
\left(
\begin{array}{cccccccc}
1 & 1 & t_1 & u_1(t_1,\boldsymbol{\mu}^1) & \ldots & u_{n_I}(t_1,\boldsymbol{\mu}^1) & \mu_1^1 \ldots \mu^1_p & y_1(t_1, \boldsymbol{\mu}^1) \\
\vdots&\vdots & \vdots & \vdots& &\vdots&\vdots & \vdots\\
n_o & 1 & t_1 & u_1(t_1,\boldsymbol{\mu}^1) & \ldots & u_{n_I}(t_1,\boldsymbol{\mu}^1)& \mu_1^1 \ldots \mu^1_p & y_{n_o}(t_1, \boldsymbol{\mu}^1)\\
n_o+1 & 1 & t_2 & u_1(t_2, \boldsymbol{\mu}^1)& \ldots & u_{n_I}(t_2,\boldsymbol{\mu}^1) & \mu_1^1 \ldots \mu^1_p & y_1(t_2, \boldsymbol{\mu}^1)\\
\vdots&\vdots & \vdots & \vdots& &\vdots&\vdots & \vdots \\
n_T n_o & 1 & t_{n_T} & u_1(t_{n_T}, \boldsymbol{\mu}^1)& \ldots & u_{n_I}(t_{n_T},\boldsymbol{\mu}^1) & \mu_1^1  \ldots  \mu^1_p & y_{n_o}(t_{n_T}, \boldsymbol{\mu}^1)\\
n_T n_o+1 & 2 & t_1 & u_1(t_1, \boldsymbol{\mu}^2)& \ldots & u_{n_I}(t_1,\boldsymbol{\mu}^2) & \mu_1^2  \ldots  \mu^2_p & y_1(t_1, \boldsymbol{\mu}^2)\\
\vdots&\vdots & \vdots & \vdots& & \vdots & \vdots & \vdots\\
n_p n_T n_o& n_p & t_{n_T}& u_1(t_{n_T}, \boldsymbol{\mu}^{n_p})& \ldots & u_{n_I}(t_{n_T},\boldsymbol{\mu}^{n_p})&  \mu_1^{n_p} \ldots  \mu^{n_p}_p& y_{n_o}(t_{n_T}, \boldsymbol{\mu}^{n_p})
\end{array}
\right)
\end{equation}

\subsection{Block-wise masked interpretable multi-head attention}

The masked interpretable multi-head attention enables the interpretability of TFT in~\Cref{fig:tft}. In the following, we briefly explain the self- attention~\cite{mlVasSP17} used in TFT and the further proposed masked interpretable multi-head attention~\cite{mlLimALetal21}. Then we propose the block-wise masked interpretable multi-head attention for SE-TFT.

Introduced in~\cite{mlVasSP17}, the self-attention mechanism is a key element in the architecture of any transformer to capture long-term correlations between the features in an input time sequence. \Cref{fig:self-att} illustrates an example of the masked self-attention mechanism, where the number $M$ of features in the time sequence is $M=5$. $M$ is also the length of the time sequence $\{y_{t-k},~\ldots,~ y_{t+\tau} \}$ in the TFT, denoted as $M = n_t$.

\begin{figure}
\centering
\includegraphics[scale=0.5]{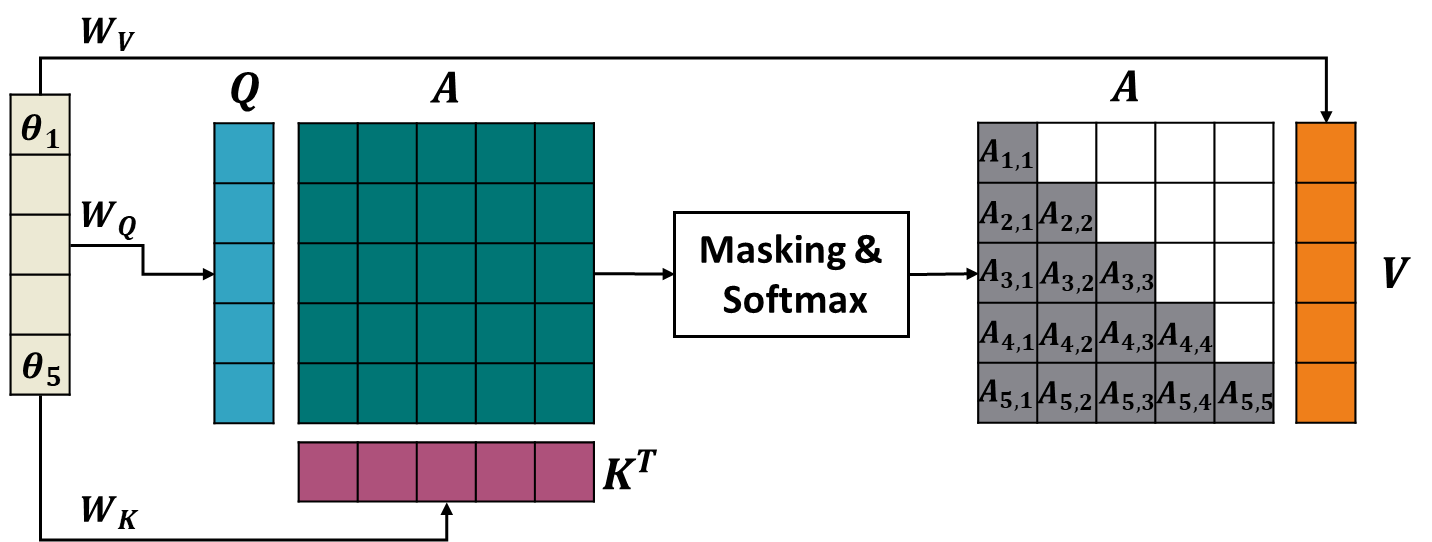}
\caption{Structure of the masked self-attention mechanism proposed in~\cite{mlVasSP17} and used in the TFT.}
\label{fig:self-att}
\end{figure}

The self-attention mechanism is used in TFT, where the matrix of inputs $\boldsymbol{\Theta}= [\boldsymbol{\theta}_1, \ldots, \boldsymbol{\theta}_{M}] \in \mathbb{R}^{M \times d_{model}}$ with $\boldsymbol{\theta}_i \in \mathbb R^{d_{model}}$ is converted to $\boldsymbol{Q} \in \mathbb{R}^{M \times d_{k}}$~(Queries), $\boldsymbol{K} \in \mathbb{R}^{M \times d_{k}}$~(Keys) and $\boldsymbol{V} \in \mathbb{R}^{M \times d_{v}}$~(Values) via linear transformations, i.e., $\boldsymbol{Q} = \boldsymbol{\Theta} \boldsymbol{W_{Q}}$, $\boldsymbol{K} = \boldsymbol{\Theta} \boldsymbol{W_{K}}$ and $\boldsymbol{V} = \boldsymbol{\Theta} \boldsymbol{W_{V}}$, where $ \boldsymbol{W_{Q}}$, $\boldsymbol{W_{K}} \in \mathbb{R}^{d_{model} \times d_{k}}$ and $ \boldsymbol{W_{V}} \in \mathbb{R}^{d_{model} \times d_{v}}$. The self-attention function is expressed as 
\begin{equation}
\textit{Attention}(\boldsymbol{Q}, \boldsymbol{K}, \boldsymbol{V}) = \boldsymbol{A}(\boldsymbol{Q},\boldsymbol{K}) \boldsymbol{V}.
\end{equation}

In the original TFT, the information of $\boldsymbol{\mu}$, $\boldsymbol{u}(t, \boldsymbol{\mu})$ and the scalar-valued output $y(t, \boldsymbol{\mu})$ at each time step is integrated into each feature $\boldsymbol{\theta}_i, i= 1,\ldots, M$ in $\boldsymbol{\Theta}$, via embedding, variable selection and local processing with LSTM. $\boldsymbol{A} \in \mathbb{R}^{M \times M}$ is the attention weight matrix computed by a scaled dot-product via $\boldsymbol{A}(\boldsymbol{Q},\boldsymbol{K}) = \textit{Softmax}\,(\boldsymbol{Q} \boldsymbol{K}^T / \sqrt{d_k})$. $d_{model}$ is the dimension of hidden states defined across the TFT model. After linear transformation, the input dimension is converted to $d_k$ and $d_v$ for the query/key sequence and for the value sequence, respectively. The magnitude of the entry $\boldsymbol{A}_{i,j}$ in the attention weight matrix $\boldsymbol{A}$ interprets the correlation between the feature at $t_i$ and the feature at $t_j$ in the time series. Masking prevents the transformer from obtaining the ``future" information, i.e., all entries $\boldsymbol{A}_{i,j}, \, i < j$ are masked and correspond to the empty entries in $\boldsymbol{A}$ in \Cref{fig:self-att}, meaning that the future feature at $t_j$ has no influence on the past feature at $t_i, \, i < j$.

In the framework of the multi-head attention from~\cite{mlVasSP17}, self-attention is employed $n_h$ times in parallel resulting in $n_h$ heads with $n_h$ attention weight matrices $\boldsymbol{A}_h, h=1, \ldots, n_h$. However, various $\boldsymbol{A}_h$ are not informative enough to describe the correlation between the features in a single time sequence. The TFT in~\cite{mlLimALetal21} provides the interpretability of the multi-head attention by averaging the different attention weight matrices $\boldsymbol{A}_h, \, h=1, \ldots, n_h$ to a single attention weight matrix $\overline{\boldsymbol{A}}$. Interpretable multi-head attention resembles the formulation of the self-attention, allowing simple interpretability studies by analysing a single averaged attention weight matrix $\overline{\boldsymbol{A}}$, like $\boldsymbol{A}$ in the self-attention. The averaged attention weight matrix $\overline{\boldsymbol{A}}$ can be computed via:
\begin{equation}
\overline{\boldsymbol{A}} = \frac{1}{n_h} \sum_{h=1}^{n_h} \boldsymbol{A}_h  = \frac{1}{n_h} \sum_{h=1}^{n_h} \textit{Softmax}\,(\boldsymbol{Q}_{h} \boldsymbol{K}_{h}^T / \sqrt{d_k}),
\end{equation}
where $\boldsymbol{Q}_{h}$ and $\boldsymbol{K}_{h}$ are queries and keys in each head. 

In SE-TFT, the spatial dimension of the output is merged into the dimension of the time. Unlike the original TFT, $M$ equals to $n_{o} \times n_t$ in the spatial (output locations)-temporal sequence. To maintain the interpretability of the attention weight matrix, the mask must be added in a block-wise manner. As illustrated in the right part of~\Cref{fig:stair}, the length of each time step in SE-TFT corresponds to the number of outputs, $n_o$. For clarity, we use the simple case $n_t = n_o = 3$ as an example. Instead of each entry of $\widetilde{\boldsymbol{A}}$, each bock $\widetilde{\boldsymbol{A}}_{i,j}, \, i<j$, is masked, the proposed block-wise masked attention weight matrix $\widetilde{\boldsymbol{A}}$ shows an extended interpretability. No masking is applied within each unmasked block to ensure that pair of features corresponding to different outputs can interact with each other. In particular, the entry $a_{k,l}, \, k,l = 1, \ldots, n_o$, in $\widetilde{\boldsymbol{A}}_{i,j}$ provides the correlation between the feature related to the $k$-th output at $t_i$ and the feature related to the $l$-th output at $t_j$.
\begin{figure}
\centering
\includegraphics[scale=0.4]{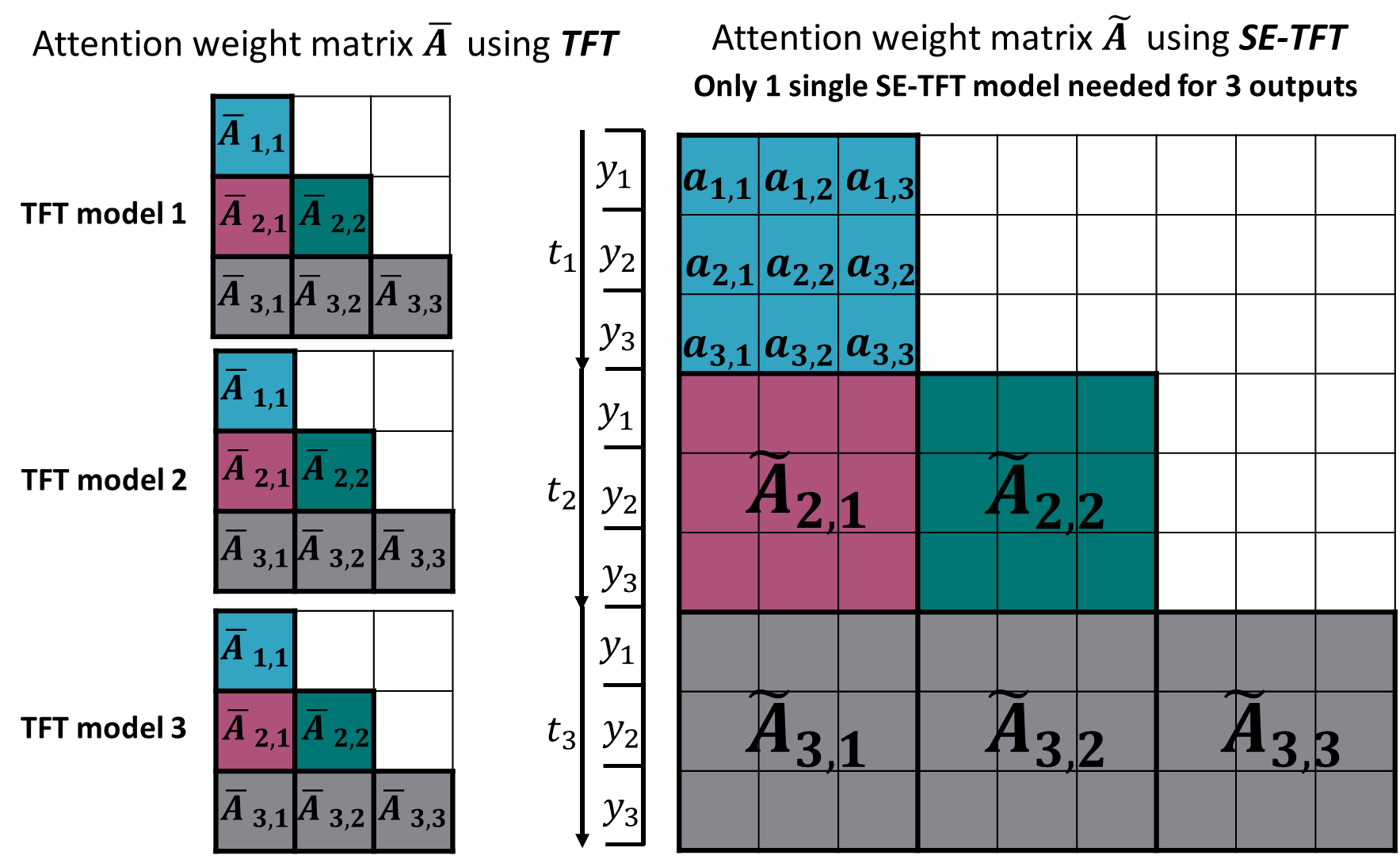}
\caption{The structure of the block-wise masked attention weight matrix $\widetilde{\boldsymbol{A}}$ in a single SE-TFT~(right) compared to the normal masked attention weight matrix $\overline{\boldsymbol{A}}$~(left) in three separate TFT models for three outputs. Here, we use $n_t = 3$ as an example.}
\label{fig:stair}
\end{figure}

\subsection{Other implementation details}

After reading the formatted row data from~\labelcref{eq:motft_data} in the form of a data file, e.g., a CSV file, SE-TFT splits the data into training, validation, and testing sets based on the parameter ID, as shown in the second column in~\labelcref{eq:motft_data}. More specifically, the raw data, containing $n_p$ groups, are divided into $n_{p_{train}}$, $n_{p_{validate}}$ and $n_{p_{test}}$ for training, validation, and testing, respectively. The past time interval $[t-k, t]$ and the forecast time interval $[t, t+\tau]$ can be set by specifying $n_k$ and $n_{\tau}$ time instances depending on the application cases and user's computational resources. When the length of the time sequence $n_t = n_k + n_{\tau}$ in SE-TFT is smaller than the total number of time steps collected in the data file ($n_t < n_T$), multiple subsets can be extracted from the whole-time sequence. For example, if time sequences corresponding to parameter sample $\boldsymbol{\mu}_i \, (i= 1, \ldots, n_p)$ contain time steps covering the interval $[0, 1000]$ with $n_T = 1000$, and the past and the forecast time interval in SE-TFT are set as $[t-1, t]$ and $[t, t+100]$, respectively (i.e., $n_t = 101$), multiple subsets can be extracted out of the whole-time sequence of $n_T=1000$ time steps, each consisting of 101 time instances. The number of subsets $n_{\Omega}$ corresponding to each of the $n_p$ parameter samples can be determined by the user, leading to $n_{\Omega} n_p$ subsets in total. SE-TFT is trained by sweeping over all the subsets. When constructing SE-TFT, the building blocks shown in~\Cref{fig:tft} are assembled, and the Adam optimizer is employed for optimizing the weights $\boldsymbol{W}$ of the SE-TFT. We employ two types of loss functions to train SE-TFT and evaluate its performance of predicting different types of outputs. The first loss function, designed to support output value predictions, is defined in $L_1$-norm, i.e.,
\begin{equation}
\label{eq:maeloss}
\mathcal{L}_{MAE}(\Omega, \boldsymbol{W}) = \sum_{\boldsymbol{y} \in \Omega} \sum_{i=n_k}^{n_t} \frac{\|\boldsymbol{y}(t_i) - \tilde{\boldsymbol{y}}(t_i)\|_1}{n_{\Omega} n_{p_{train}} n_{\tau}},
\end{equation}
where $\Omega$ is the set of training data containing $n_{p_{train}}$ parameter samples sets. $\boldsymbol{W}$ includes the trainable weights of TFT. The following theorem can be easily proved.
\begin{theorem}
The $L_1$-norm loss in~\labelcref{eq:maeloss} is equivalent to twice the quantile loss with quantile value $q=0.5$ used in~\cite{mlLimALetal21}. 
\end{theorem}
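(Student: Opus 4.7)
The plan is to unfold the quantile loss from~\cite{mlLimALetal21} at $q=0.5$ and show that it reduces, up to a positive multiplicative constant, to the mean absolute error loss in~\labelcref{eq:maeloss}. Recall that the quantile loss between a target $y$ and a prediction $\tilde{y}$ for a quantile level $q\in(0,1)$ is
\begin{equation*}
\rho_{q}(y,\tilde{y}) \;=\; q\,\max(y-\tilde{y},0) \;+\; (1-q)\,\max(\tilde{y}-y,0).
\end{equation*}
The total quantile loss used for training in~\cite{mlLimALetal21} is the sum (or equivalently the normalized average) of $\rho_{q}(y(t_i),\tilde{y}(t_i))$ over the training set and the forecast horizon.

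First, I would substitute $q=0.5$ into $\rho_{q}$. This gives
\begin{equation*}
\rho_{0.5}(y,\tilde{y}) \;=\; \tfrac{1}{2}\bigl[\max(y-\tilde{y},0) + \max(\tilde{y}-y,0)\bigr] \;=\; \tfrac{1}{2}\,|y-\tilde{y}|,
\end{equation*}
using the elementary identity $|a|=\max(a,0)+\max(-a,0)$. For vector-valued outputs, the componentwise sum of $\rho_{0.5}$ produces $\tfrac{1}{2}\|\boldsymbol{y}(t_i)-\tilde{\boldsymbol{y}}(t_i)\|_1$. Summing this over the forecast indices $i=n_k,\ldots,n_t$ and over the $n_{p_{train}}$ training parameter groups (together with the $n_\Omega$ subsets) and dividing by the same normalization factor $n_\Omega n_{p_{train}} n_\tau$ yields exactly $\tfrac{1}{2}\,\mathcal{L}_{MAE}(\Omega,\boldsymbol{W})$.

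Finally, I would conclude that since the two losses differ only by the positive constant factor $\tfrac{1}{2}$, they have identical minimizers with respect to the weights $\boldsymbol{W}$, and their gradients with respect to $\boldsymbol{W}$ are proportional, so the two training objectives are equivalent. There is no real obstacle here: the argument is essentially the identity $|a|=\max(a,0)+\max(-a,0)$ applied inside a finite sum. The only point to be careful about is bookkeeping of the normalization constant and of how the multiple outputs in iSTFT are summed, so that the constant factor $\tfrac{1}{2}$ is traced consistently from the scalar identity to the vector-valued, time-indexed training loss.
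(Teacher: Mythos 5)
Your proposal is correct and follows essentially the same route as the paper: substitute $q=0.5$ into the quantile loss, reduce $\mathcal{Q}(y,\tilde y,0.5)$ to $0.5|y-\tilde y|$ (the paper does this by explicit case analysis where you invoke the identity $|a|=\max(a,0)+\max(-a,0)$), sum over components to obtain the $L_1$ norm, and conclude $\mathcal{L}_{q=0.5}=0.5\,\mathcal{L}_{MAE}$. Your added remark that the factor $\tfrac12$ leaves the minimizers and gradient directions unchanged is a harmless elaboration the paper leaves implicit.
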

\begin{proof}
In fact, the quantile loss in~\cite{mlLimALetal21} is defined (for $n_o=1$) as,
\begin{equation}
\label{eq:qloss}
\mathcal{L}_{q}(\Omega, \boldsymbol{W}) = \sum_{\boldsymbol{y} \in \Omega} \sum_{i=n_k}^{n_t} \sum_{j = 1}^{n_o} \frac{\mathcal{Q}(y_j(t_i),~\tilde{y}_j(t_i),~q)}{n_{\Omega} n_{p_{train}} n_{\tau} n_o}.
\end{equation}
In~\labelcref{eq:qloss}, $\mathcal{Q}(y,~\tilde{y},~q)$ is formed as:
\begin{equation}
\label{eq:quantile_loss}
\mathcal{Q}(y,~\tilde{y},~q) = q (y - \tilde{y})_+ + (1-q) (\tilde{y} - y)_+,
\end{equation}
where $(\cdot)_+ = \text{max}(0, \cdot)$. Note that when $q = 0.5$, $\mathcal{Q}$ can be rewritten as $0.5(y - \tilde{y})_+ + 0.5 (\tilde{y} - y)_+$. Using the definition of $(\cdot)_+$, we obtain
\begin{equation}
  \mathcal{Q}(y,~\tilde{y},~q=0.5) =
    \begin{cases}
    0.5(y - \tilde{y}) & \text{if } \tilde{y} \leq y \\
    -0.5(y - \tilde{y}) & \text{if } \tilde{y} > y.
    \end{cases}
\end{equation}
As a result, $\mathcal{Q} = 0.5|y - \tilde{y}|$, so that $\sum\limits_{j=1}^{n_o} |y_j-\tilde y_j|= \|\boldsymbol{y} - \tilde{\boldsymbol{y}}\|_1$. Finally, 
\begin{equation}
\mathcal{L}_{q=0.5}(\Omega, \boldsymbol{W}) = 0.5 \sum_{\boldsymbol{y} \in \Omega} \sum_{i=n_k}^{n_t} \frac{\|\boldsymbol{y}(t_i) - \tilde{\boldsymbol{y}}(t_i)\|_1}{n_{\Omega} n_{p_{train}} n_{\tau}} = 0.5 \mathcal{L}_{MAE}.
\end{equation}
\end{proof}

The $L_1$-norm loss in~\labelcref{eq:maeloss} is usually denoted as the mean absolute error (MAE) loss function. %We can also compare the accuracy of the prediction using this $L_1$-norm loss with that using the mean squared error (MSE), with the $L_2$-norm, 
%\begin{equation}
%\label{eq:mseloss}
%\mathcal{L}_{MSE}(\Omega, \boldsymbol{W}) = \sum_{\boldsymbol{y} \in \Omega} \sum_{i=n_k}^{n_t} \frac{\|\boldsymbol{y}(t_i) - \tilde{\boldsymbol{y}}(t_i)\|_2}{n_{\Omega} n_{p_{train}} n_{\tau}},
%\end{equation}

The second loss function, $\mathcal{L}_{q}$, is used for quantile forecasting and is identical to that employed in the original TFT model. This quantile loss, defined in \cref{eq:qloss} and \cref{eq:quantile_loss}, is applied with three output quantiles ($q = 0.1, 0.5, 0.9$) in the numerical experiments. Under these settings, SE-TFT simultaneously predicts the 10th, 50th, and 90th percentiles over the forecast horizon, thereby providing prediction intervals that capture the range of the outputs at each prediction step. In the next section, we apply the first loss function in~\cref{eq:maeloss} to achieve output value predictions for the Lorenz–63 model and the coupled electrochemical kinetics and diffusion model. The second loss function is applied to the FitzHugh–Nagumo model for quantile forecasting of the outputs.

\section{Numerical results of SE-TFT}
\label{sec:pred_results}

This section presents the performance of SE-TFT on three dynamical systems from different engineering applications. We use the mean relative $\mathcal{L}_2$ error, denoted by $\epsilon_k$, to evaluate the forecasting accuracy. The output $\boldsymbol{y}_k^j$ is a vector collecting the k-th output at all future time instances $\{t_{n_k}, t_{n_k+1}, \ldots, t_{n_t}\}$ corresponding to the $j$-th testing parameter. The mean relative error is defined as:
\begin{equation}
\label{eq:err_ind}
\epsilon_k = \frac{1}{N}\sum_{j=1}^N\frac{ \| \hat{\boldsymbol{y}}_{k}^j - \boldsymbol{y}_{k}^j \|_2}{\| \boldsymbol{y}^j_{k} \|_2},
\end{equation}
where $\boldsymbol{y}^j_k = 
\bigl[
y_k(t_{n_k}, \boldsymbol{\mu}_j),\;
y_k(t_{n_k+1}, \boldsymbol{\mu}_j),\;
\ldots,\;
y_k(t_{n_t}, \boldsymbol{\mu}_j)
\bigr]$.

%Here we define an error measure $\epsilon_y(\mu)$ in \labelcref{eq:err_ind} for a scalar-valued output $y(t, \mu)$. For systems with multiple outputs, we compute $\epsilon_y(\mu)$ for each output.
All numerical examples were conducted on a personal computer equipped with a 12th Gen Intel\textsuperscript{\textregistered} Core\textsuperscript{\texttrademark} i5-12600K CPU, 31 GB of RAM, a 64-bit operating system, and an NVIDIA\textsuperscript{\textregistered} RTX\textsuperscript{\texttrademark} A4000 GPU. The code was implemented in Python 3.10.8 using TensorFlow 2.11.0.

\subsection{Lorenz-63 model}
\label{sec:Lorenz}

The Lorenz-63 model is a simplified mathematical model to describe chaotic dynamics, which is defined by the following system of ODEs: 
\begin{equation}
\label{eq:Lorenz}
\frac{dy_1}{dt} = \sigma \left(y_2 - y_1 \right), \quad \frac{dy_2}{dt} = y_1 \left(\rho - y_3 \right) - y_2, \quad \frac{dy_3}{dt} = y_1 y_2 - \beta y_3,
\end{equation}
where $\sigma = 10$, $\rho = 28$, and $\beta = 8/3$. The Lorenz-63 model is parametrized by varying the initial conditions. This model is often used as the academic benchmark for proof-of-concepts studies of dynamical systems learning~\cite{Lorenz1963, morBruPK16, morChaLKetal19, mlGenZ22}. We initialized the system with random initial states $(y_1(t_1), y_2(t_1), y_3(t_1))$ in uniform distributions: $y_1(t_1) \sim U(-20, 20)$, $y_2(t_1) \sim U(-20, 20)$, and $y_3(t_1) \sim U(10, 40)$. The numerical integration of ODEs was performed using a 4th order Runge-Kutta integrator with a time step of $0.01$. The QoIs of the example are simply the three states, i.e., $\boldsymbol{y}=(y_1, y_2, y_3)^T \in \mathbb R^3$.
The training data, validating data and testing data correspond to time series with $n_{p_{train}} = 2048$, $n_{p_{validate}} = 64$ and $n_{p_{test}} = 256$ groups of random initial states, respectively. Each training time sequence consists of $256$ time steps. Each validating and each testing time sequence contains $1024$ time steps.

In the training phase, both the training and validating data corresponding to each given parameter sample are chunked into partially overlapping 8 subsets within the total time series, each consisting of $n_t = 129$ time steps out of the $n_T = 256$ training time steps. SE-TFT is repeatedly trained for each subset, where the outputs $\boldsymbol{y}$ at the first $9$ time steps are set as the observed outputs at the past time instances, while the outputs at the subsequent $120$ time steps are to be predicted by the SE-TFT. Number of epochs of training SE-TFT is set as 5000, while other hyperparameters are shown in \Cref{tab:hyper_63}. In the testing phase, SE-TFT predicts $\boldsymbol{y}(t)$ at the the next $120$ time steps corresponding to all testing initial states in the testing set $\{\boldsymbol{y}^*(t_1)\}$, in a single operation. Within $n_{p_{test}}=256$ groups of testing data, we pick the first $129$ time steps and the last $129$ time steps ($n_t = 129$) out of the $n_T = 1024$ testing time steps, leading to 2 subsets of time sequences in each group. Consequently, we have 512 testing cases in total. For each testing case, the vector $\boldsymbol{y}^*(t_1)$ at its first time instance is considered as initial condition and $\boldsymbol{y}^*$ at the first $9$ time instances are the observed outputs. The outputs at the subsequent $120$ time steps are predicted. %SE-TFT takes around 6.4\si{s} to finish predicting all the three outputs at the 120 time steps for all the testing cases. 

SE-TFT also supports longer-term prediction via autoregressive rollouts, in which its own predictions are recursively fed into the look-back window until the specified time horizon is reached. To assess its autoregressive performance, the same trained model is evaluated on an additional task involving multi-output prediction over 240 time steps. Specifically, after predicting the first 120 time steps, the predicted states from the most recent $9$ time instances are treated as newly observed outputs and fed back into the model. SE-TFT then predicts the subsequent 120 time steps, yielding a total forecast horizon of 240 time steps obtained through two autoregressive steps. Note that the reference data in this task is collected at the first 249 time steps and the last 249 time steps ($n_t = 249$) from the full set of $n_T = 1024$ testing time steps.
\begin{table}
\small
\caption{Lorenz-63 model: the hyperparameters for training SE-TFT.}
\label{tab:hyper_63}
  \begin{tabular}{|c|c|c|c|c|c|} \hline
   Learning rate & Dropout rate & Number of heads & $d_{model}$ & Minibatch size & Max gradient norm \\ \hline
   0.001 & 0.2 & 4 & 160 & 256 & 1.0 \\ \hline
\end{tabular}
\end{table}

Six randomly-picked testing cases for 120 time steps are illustrated in~\Cref{fig:lorenz_sol}, which show different complex and chaotic trajectories that evolve from different initial conditions. \Cref{fig:lorenz_sol_a} presents the autoregressive predictions based on the same trained model for 240 time steps. The errors of these two tasks are shown in~\Cref{tab:error_lorenz}. It can be observed that that SE‑TFT achieves an error of approximately 
$5\%$ for a single autoregressive step. The error increases slightly as the forecast horizon becomes longer.
\begin{table}
\caption{Lorenz-63 model: the value of the error in~\labelcref{eq:err_ind} for two tasks over all testing cases.}
\centering
\begin{tabular}{llll}
\toprule
Numerical tasks & $\epsilon_{y_{1}}(\boldsymbol{\mu}^*_k)$ & $\epsilon_{y_{2}}(\boldsymbol{\mu}^*_k)$ & $\epsilon_{y_{3}}(\boldsymbol{\mu}^*_k)$ \\
\midrule
Lorenz-63 & 0.0431 & 0.0536 & 0.0186 \\
Lorenz-63 (autoregressive) & 0.0932 & 0.1136 & 0.0418 \\
\bottomrule
\end{tabular}
\label{tab:error_lorenz}
\end{table}
%When SE-TFT is trained with MAE loss, out of all the 512 testing cases, there are 493, 485 and 499 cases with error $\epsilon_{y_1,y_2,y_3}(\boldsymbol{y}^*(t_1)) < 0.05$ for the three outputs $\{y_1(t, \boldsymbol{y}^*(t_1)), y_2(t, \boldsymbol{y}^*(t_1)), y_3(t, \boldsymbol{y}^*(t_1))\}$, respectively. This indicates that over $96 \%$ of the testing data are accurately predicted by SE-TFT with error smaller than $5 \%$ in this scenario. After averaging the errors over all testing cases, we get $\frac{1}{512}\sum_{k = 1}^{512}\epsilon_x(\boldsymbol{y}_k^*(t_1)) = 0.0266$, $\frac{1}{512}\sum_{k = 1}^{512}\epsilon_y(\boldsymbol{y}_k^*(t_1)) = 0.0303$ and $\frac{1}{512}\sum_{k = 1}^{512}\epsilon_z(\boldsymbol{y}_k^*(t_1)) = 0.0130$, meaning that the average relative errors of the three predicted outputs over all testing cases range from $1\%$ to $3\%$. In contrast, the trained SE-TFT with MSE loss produces 477, 459 and 493 acceptable predicted results for the three outputs, respectively, with $93 \%$ ratio of accurate prediction. The results based on both loss functions confirm that SE-TFT is accurate for long-term time sequence prediction. Moreover, using the MAE loss function gives higher accuracy of prediction for this numerical example.
\begin{figure}
\begin{subfigure}{0.33\textwidth}
\includegraphics[width=0.99\linewidth]{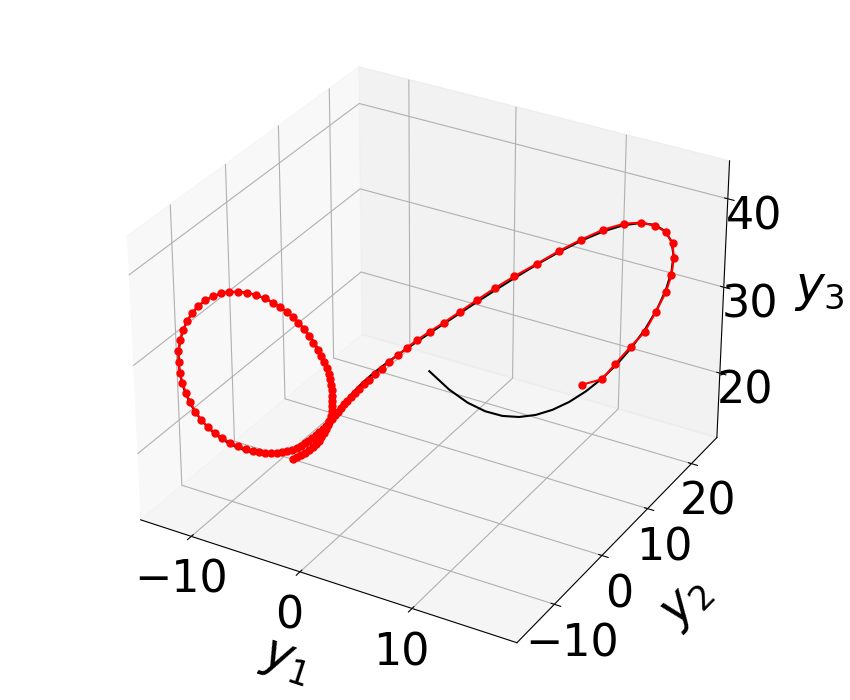}
\label{fig:Lo-65}
\end{subfigure}
\begin{subfigure}{0.33\textwidth}
\includegraphics[width=0.99\linewidth]{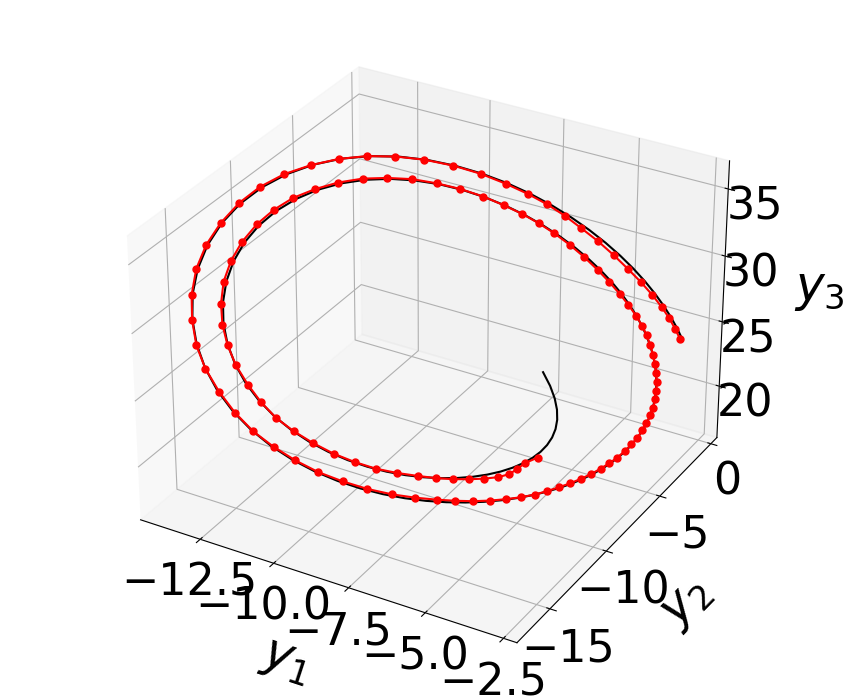}
\label{fig:Lo-129}
\end{subfigure}
\begin{subfigure}{0.33\textwidth}
\includegraphics[width=0.99\linewidth]{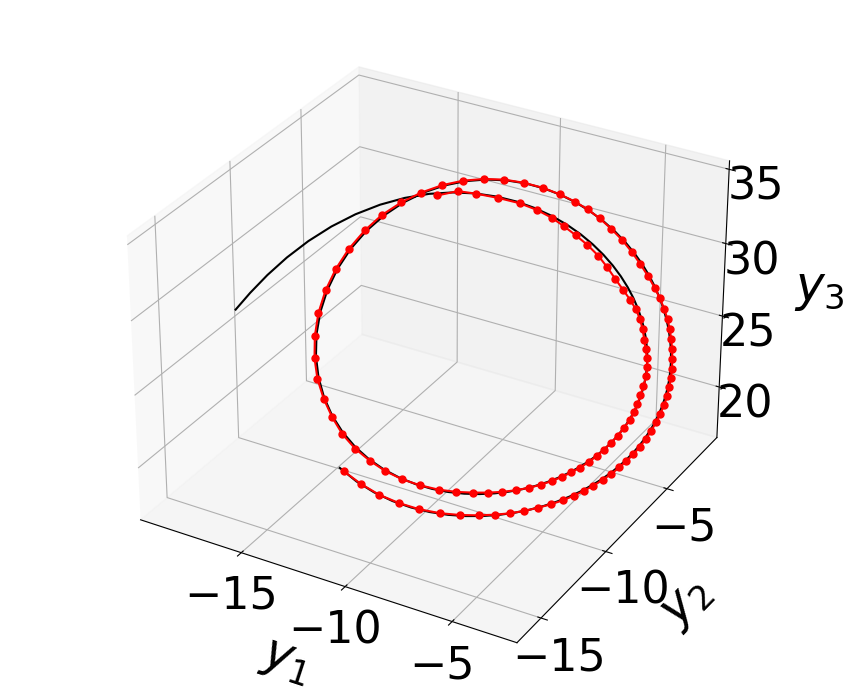}
\label{fig:Lo-193}
\end{subfigure}
\begin{subfigure}{0.33\textwidth}
\includegraphics[width=0.99\linewidth]{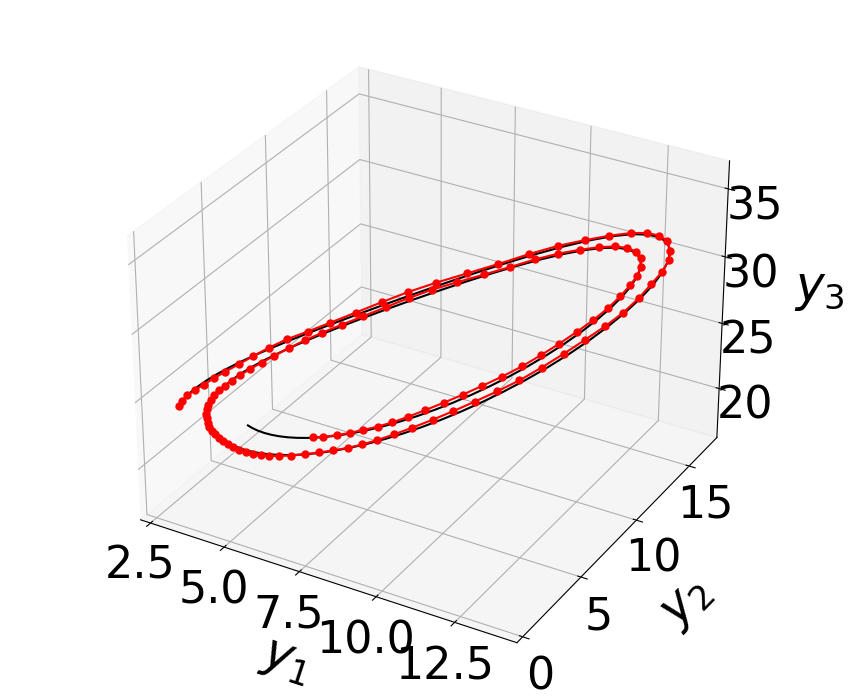} 
\label{fig:Lo-2}
\end{subfigure}
\begin{subfigure}{0.33\textwidth}
\includegraphics[width=0.99\linewidth]{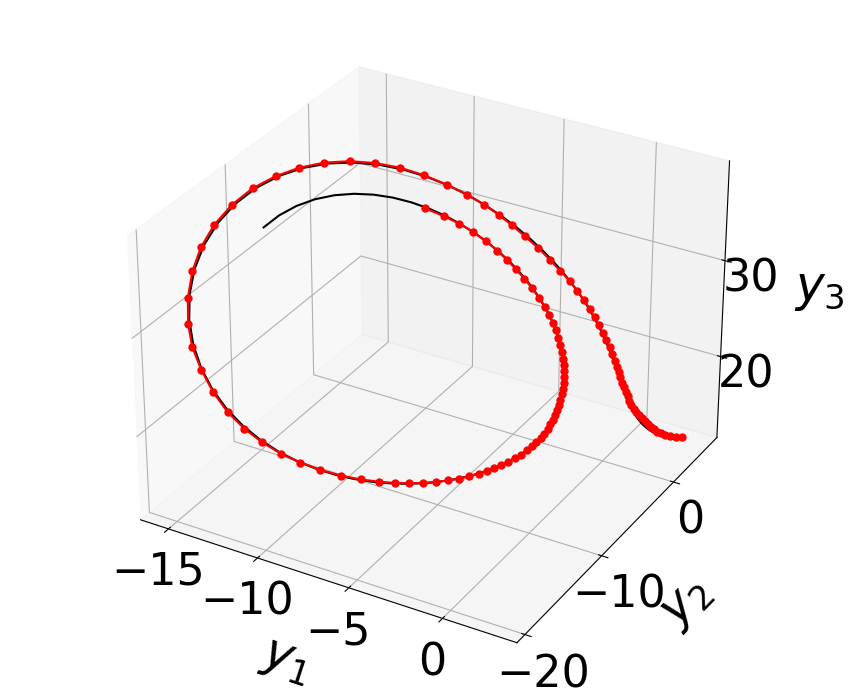}
\label{fig:Lo-66}
\end{subfigure}
\begin{subfigure}{0.33\textwidth}
\includegraphics[width=0.99\linewidth]{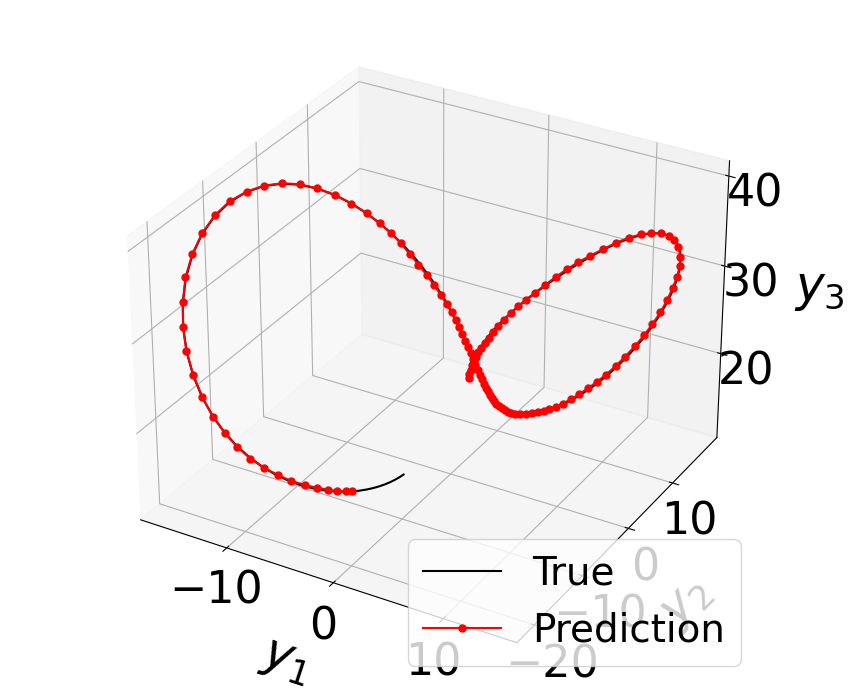}
\label{fig:Lo-450}
\end{subfigure}
\caption{Lorenz-63 model: the predicted solution and the reference solution.}
\label{fig:lorenz_sol}
\end{figure}
\begin{figure}
\begin{subfigure}{0.33\textwidth}
\includegraphics[width=0.99\linewidth]{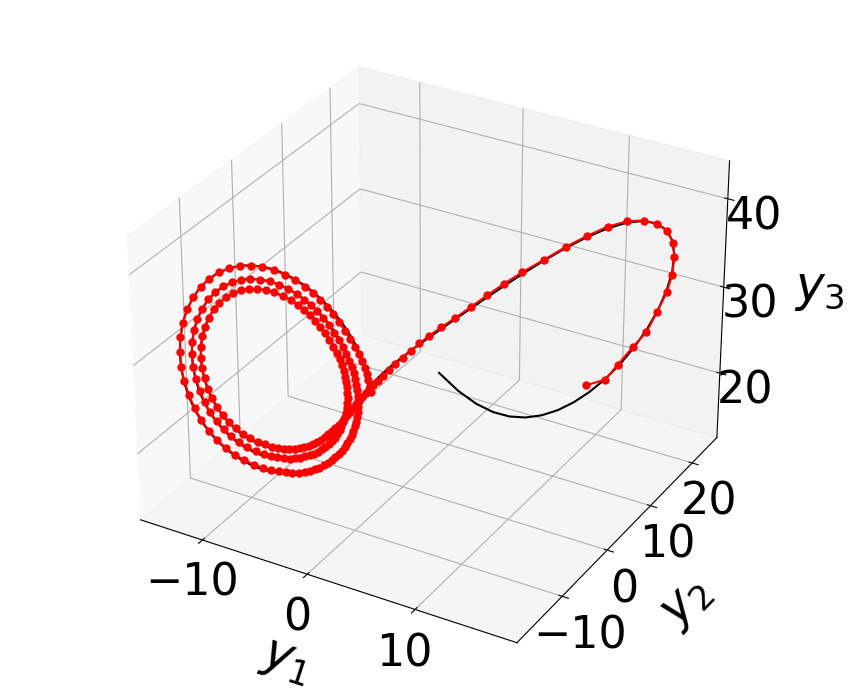}
\label{fig:Lo-65-q}
\end{subfigure}
\begin{subfigure}{0.33\textwidth}
\includegraphics[width=0.99\linewidth]{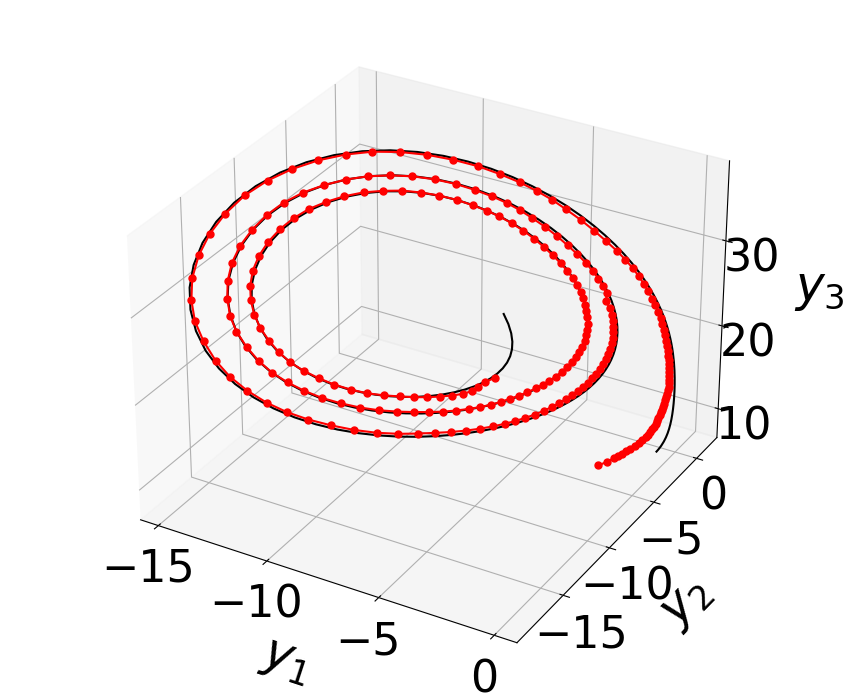}
\label{fig:Lo-129-q}
\end{subfigure}
\begin{subfigure}{0.33\textwidth}
\includegraphics[width=0.99\linewidth]{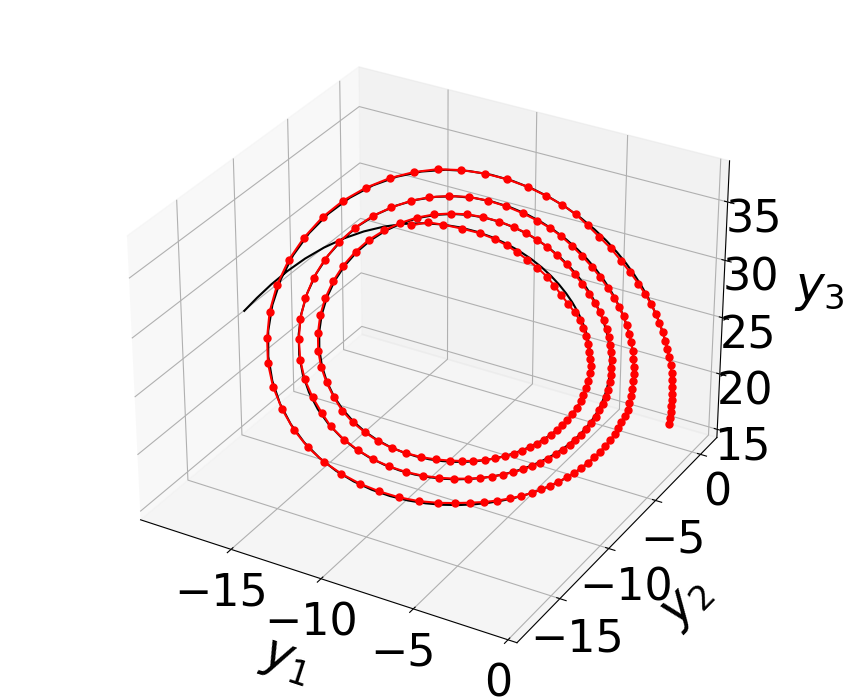}
\label{fig:Lo-193-q}
\end{subfigure}
\begin{subfigure}{0.33\textwidth}
\includegraphics[width=0.99\linewidth]{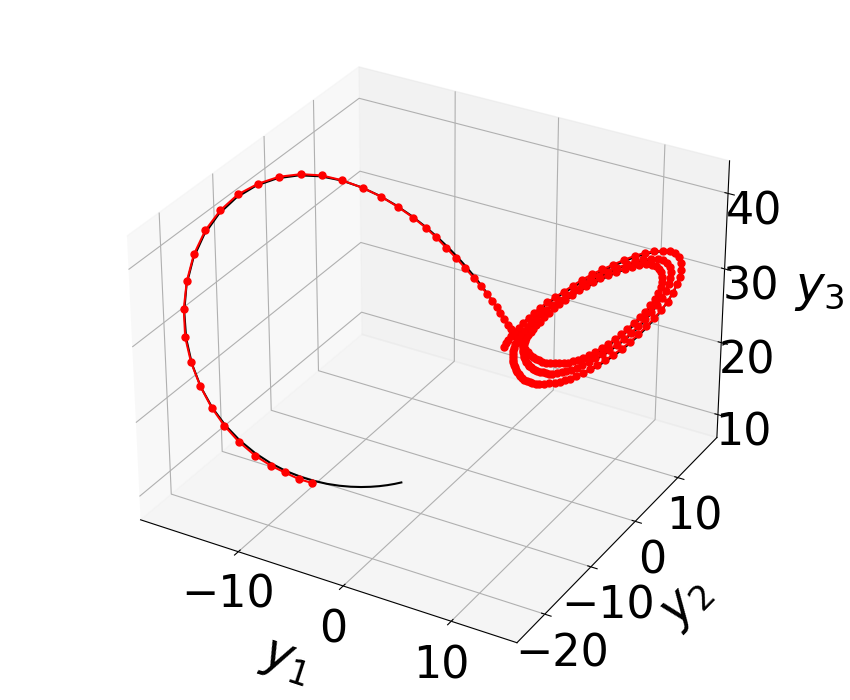}
\label{fig:Lo-2-q}
\end{subfigure}
\begin{subfigure}{0.33\textwidth}
\includegraphics[width=0.99\linewidth]{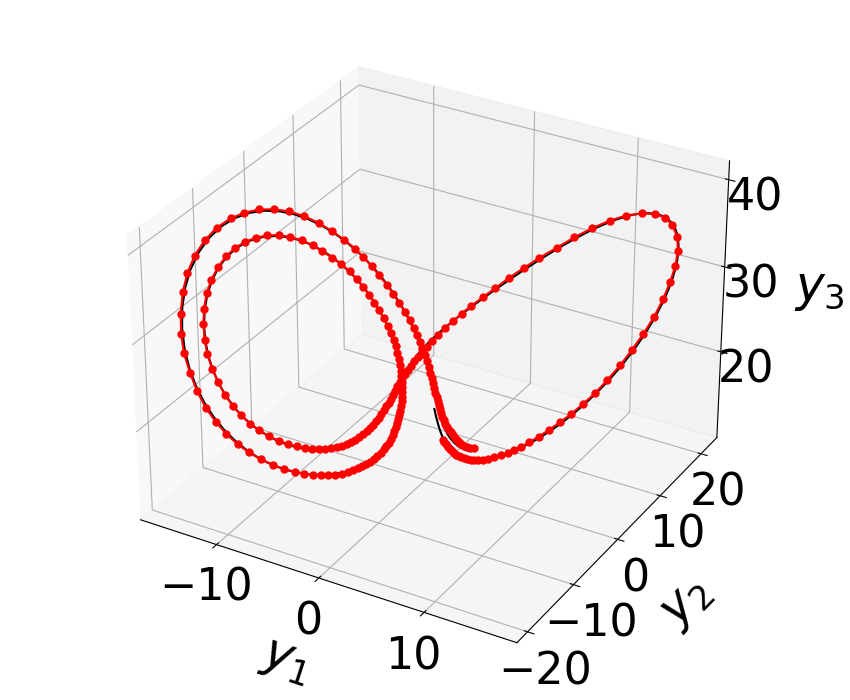}
\label{fig:Lo-66-q}
\end{subfigure}
\begin{subfigure}{0.33\textwidth}
\includegraphics[width=0.99\linewidth]{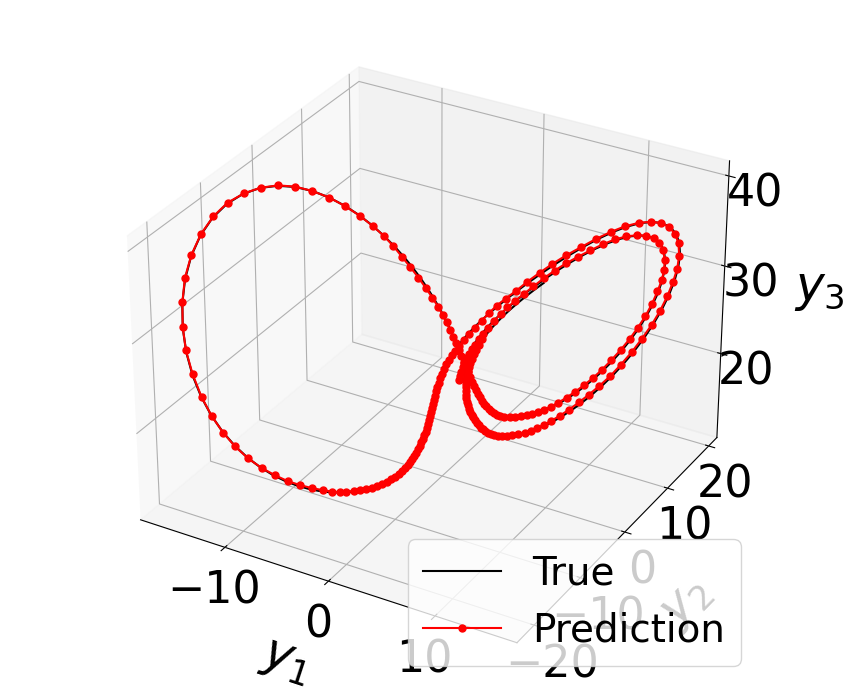}
\label{fig:Lo-450-q}
\end{subfigure}
\caption{Lorenz-63 model: the predicted solution~(autoregressive) and the reference solution.}
\label{fig:lorenz_sol_a}
\end{figure}

\subsection{The FitzHugh-Nagumo model}
\label{sec:FitzHugh}
The FitzHugh-Nagumo model is used to exam the response of neurons to external stimuli~\cite{FitzHug}. When the external stimulus exceeds a certain threshold value, the system will exhibit a characteristic excursion in phase space, representing activation and deactivation of the neuron. The model~\cite{morwiki_modFHN} is described by the coupled PDE-ODE system,

\begin{align}\label{eq:FitzHugh}
\varepsilon \frac{\partial v(x,t, \varepsilon, c)}{\partial t}&=\varepsilon \frac{\partial^2 v(x,t,\varepsilon, c)}{\partial x^2}+f(v(x,t, \varepsilon, c))-w(x,t,\varepsilon, c)+c, \nonumber \\
\frac{\partial w(x,t,\varepsilon, c)}{\partial t}&=bv(x,t,\varepsilon, c)-\gamma w(x,t,\varepsilon, c)+c,
\end{align}
with the nonlinear function $f(v)=v(v-0.1)(1-v)$, $b=0.5, \gamma=2$. The two independent parameters are $\varepsilon \in [0.01, 0.04]$ and $c \in [0.025, 0.075]$, so that $\boldsymbol{\mu}=(\varepsilon, c)^T$. The initial and boundary conditions are 
\begin{align}
\label{eq:FitzHughIB}
&v(x,0,\varepsilon, c)=0, &w(x,0,\varepsilon, c)=0, \quad x\in [0, 1], \nonumber \\
&v_x(0,t,\varepsilon, c)=-i_0(t), &v_x(L,t,\varepsilon, c)=0, \quad t \in [0,5].
\end{align}
The external input $u(t)$ is $i_0(t)=5 \times 10^4 t^3 e^{(-15t)}$.
After discretization in space, we obtain a discretized system in the form of~(\ref{eq:fom}) with $N=16384$. The QoIs of this model are the two state variables on the left boundary, i.e., $\boldsymbol{y} := (v(0,t, \boldsymbol{\mu}), w(0,t, \boldsymbol{\mu}))^T$.

The parameters are sampled in a 2D parameter space $ [0.01, 0.04] \times [0.025, 0.075]$ via Latin hypercube sampling, leading to $n_p=126$ parameter samples, i.e., $\boldsymbol{\mu}_i=(\varepsilon_i, c_i)^T, i=1, \ldots, 126$. Given any sample $\boldsymbol{\mu^*} = (\varepsilon^*, c^*)^T$ of $\boldsymbol{\mu}$, we obtain the data from numerically solving the discretized system with fixed time step size $\Delta t=0.01$, resulting in a solution sequence with 500 time steps. The sequence of each output can be straightforwardly extracted from the solution sequence. The training, validation, and testing data are divided according to the parameter samples as $n_{p_{train}}=108$, $n_{p_{validate}}=12$ and $n_{p_{test}}=6$. To train SE-TFT, the outputs at the first time instance $\boldsymbol{y}(t_1, \boldsymbol{\mu})$ are treated as the past observed outputs. The outputs $\boldsymbol{y}(t_j, \boldsymbol{\mu})$, $j = 2, \ldots, 501$, in the following 500 time instances are to be predicted by SE-TFT. In the training phase, the time sequence corresponding to each parameter sample is not further divided into subsets. Training SE-TFT takes 8000 epochs without early stopping. Some other hyperparameters used in training SE-TFT are listed in \Cref{tab:hyper_FHN}. Again, SE-TFT predicts the output sequences at all future time instances in one step for any testing parameter. % and takes 4.05\si{s} for all testing parameter cases in total.
\begin{table}
\small
\caption{The FitzHugh-Nagumo model: the hyperparameters for training SE-TFT.}  \label{tab:hyper_FHN}
  \begin{tabular}{|c|c|c|c|c|c|} \hline
   Learning rate & Dropout rate & Number of heads & $d_{model}$ & Minibatch size & Max gradient norm \\ \hline
 0.0005  & 0.1 & 1 & 160 & 64 & 100\\ \hline
\end{tabular}
\end{table}

 For this numerical example, SE-TFT is trained to predict quantiles for $q \in \{0.1, 0.5, 0.9\}$ (corresponding to the P10, P50, and P90 forecasts), and the resulting dynamical behaviors together with the associated forecast intervals for six testing parameter cases are shown in \Cref{fig:FHN_sol_q}. The dash–dot curves represent the P50 forecasts, i.e., the output values for the two output variables, while the prediction interval is the green area bounded by the P10 and P90 forecasts in green dashed lines. As observed in the figure, the reference solutions are consistently inside the prediction intervals, indicating good uncertainty calibration over different dynamical patterns. 
 
 The output errors $\epsilon_k$ between the P50 forecasts and the reference solutions for 6 testing cases are reported in~\Cref{tab:error_FHN}. Averaged over all testing parameter samples, the mean relative errors are $\frac{1}{6}\sum_{k = 1}^{6}\epsilon_{v}(\boldsymbol{\mu}_k) = 0.0314$ and $\frac{1}{6}\sum_{k = 1}^{6}\epsilon_{w}(\boldsymbol{\mu}_k) = 0.0188$ for the two output variables, respectively. The P50 and P90 quantile losses ($\mathcal{L}_{q=0.5}, \, \mathcal{L}_{q=0.9}$) evaluated on the testing cases are 0.0585 and 0.0248, respectively. Overall, these results demonstrate that SE-TFT accurately captures both the parameter-dependent dynamics and predictive uncertainty using only the initial states and testing parameters as inputs.
 
\begin{figure}
\begin{subfigure}{0.33\textwidth}
\includegraphics[width=0.9\linewidth]{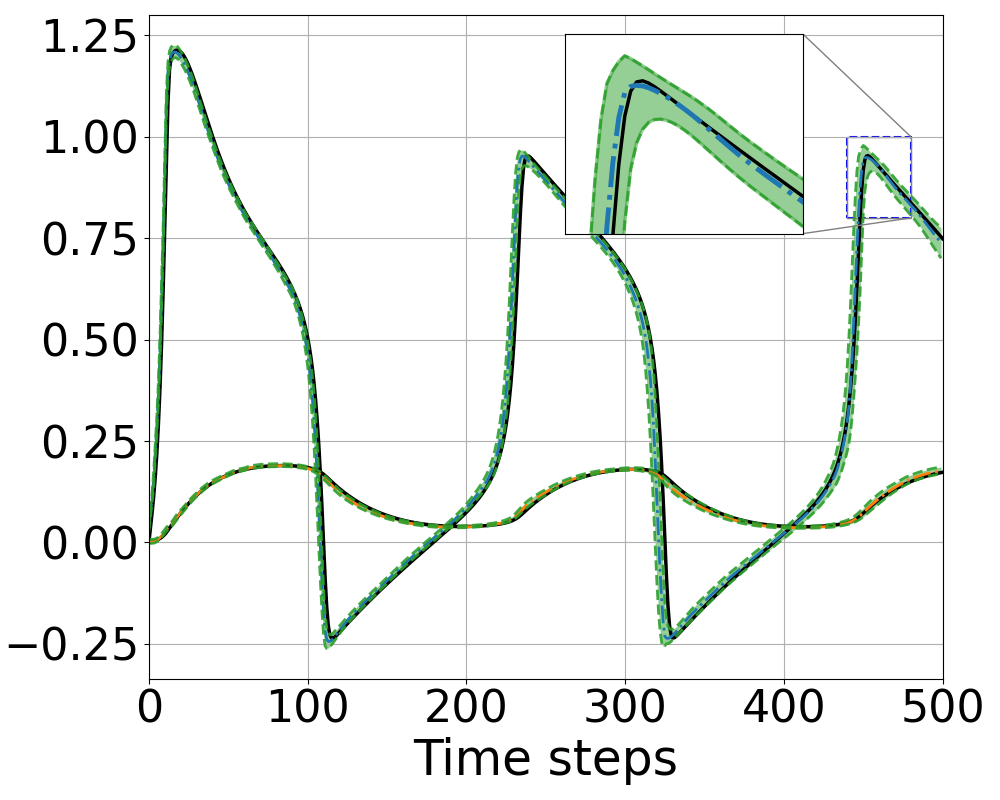}
\label{fig:FHN_1_q}
\end{subfigure}
\begin{subfigure}{0.33\textwidth}
\includegraphics[width=0.9\linewidth]{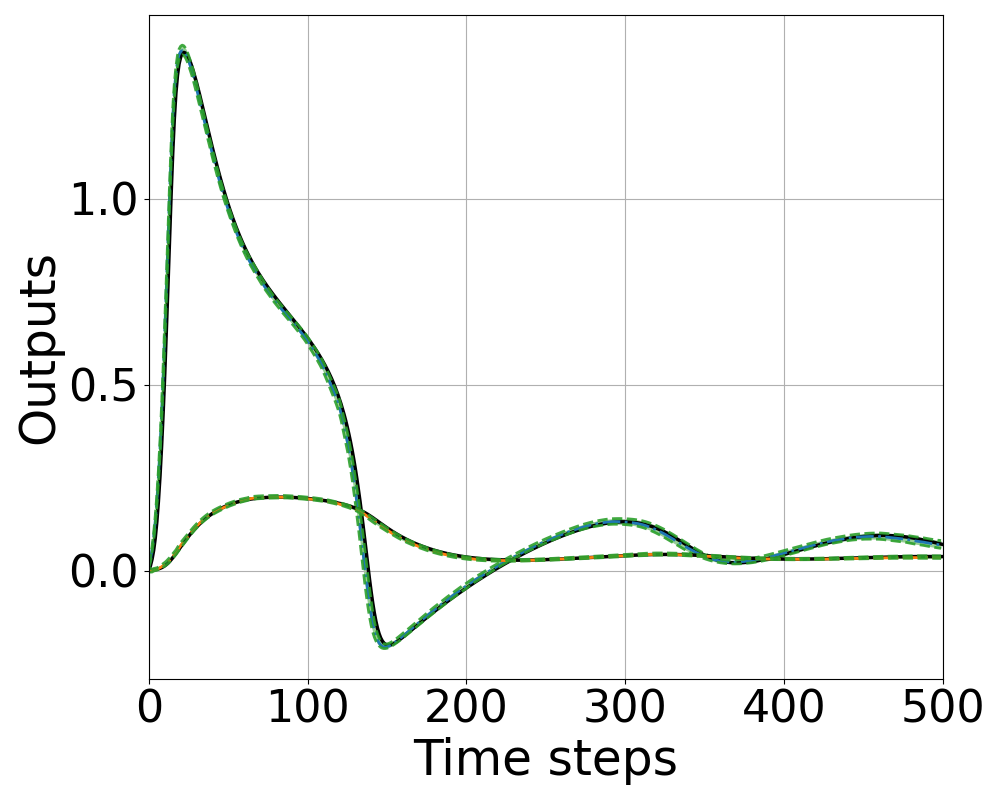}
\label{fig:FHN_2_q}
\end{subfigure}
\begin{subfigure}{0.33\textwidth}
\includegraphics[width=0.9\linewidth]{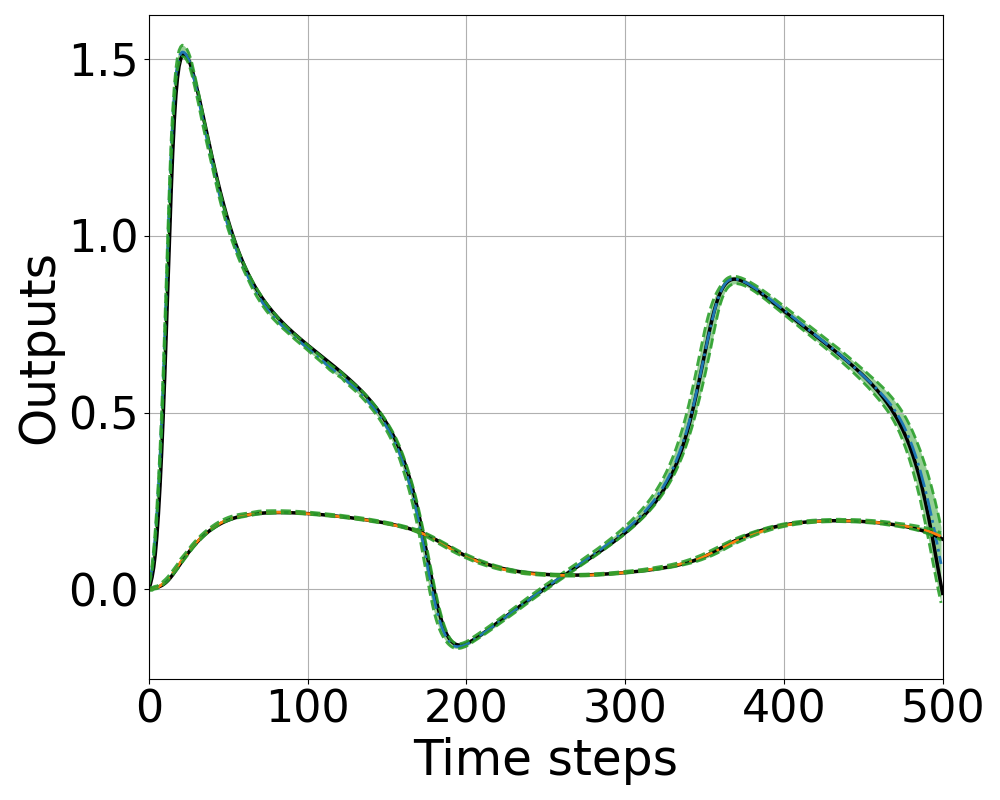}
\label{fig:FHN_3_q}
\end{subfigure}
\begin{subfigure}{0.33\textwidth}
\includegraphics[width=0.9\linewidth]{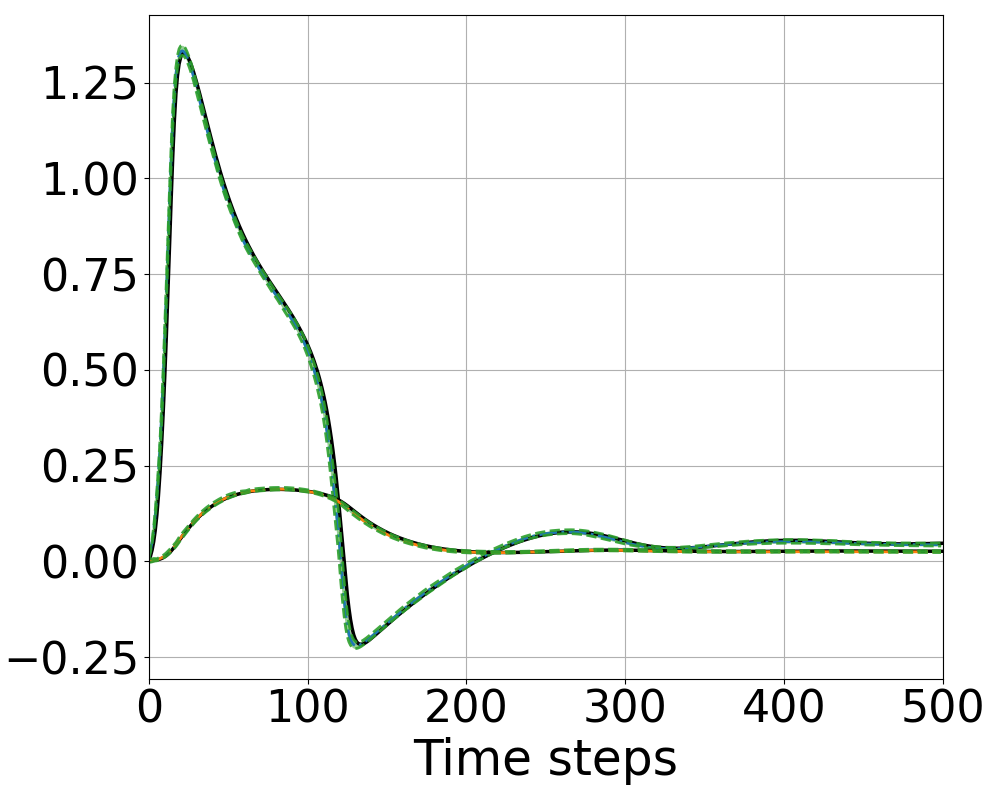}
\label{fig:FHN_4_q}
\end{subfigure}
\begin{subfigure}{0.33\textwidth}
\includegraphics[width=0.9\linewidth]{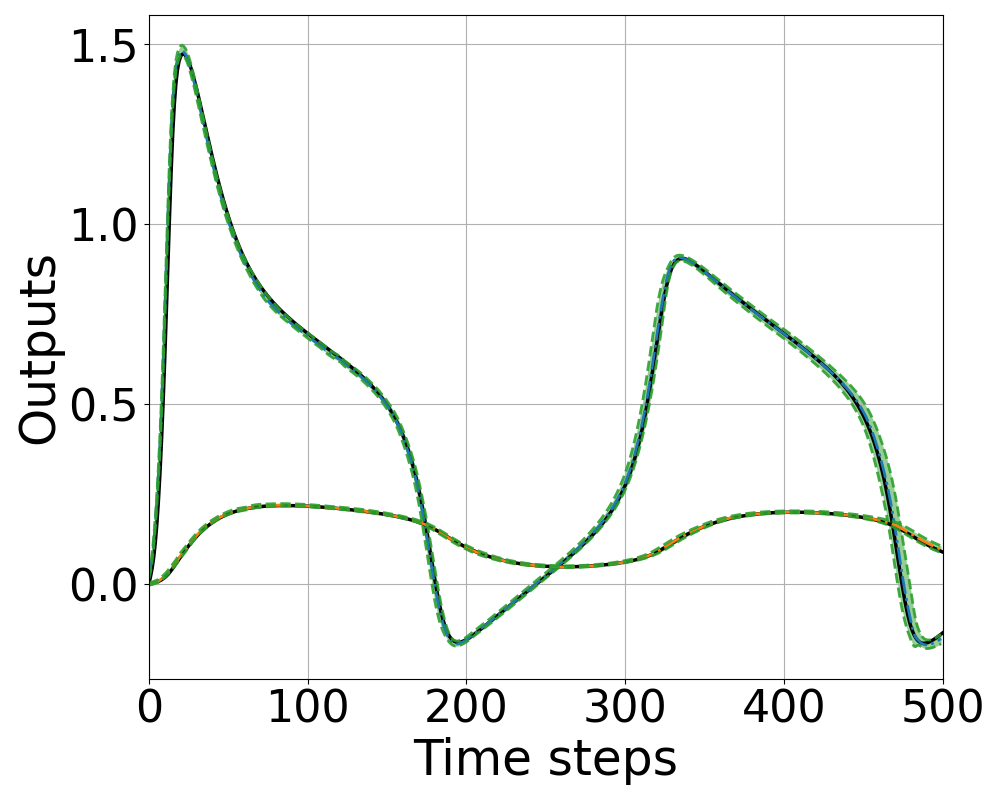}
\label{fig:FHN_5_q}
\end{subfigure}
\begin{subfigure}{0.33\textwidth}
\includegraphics[width=0.9\linewidth]{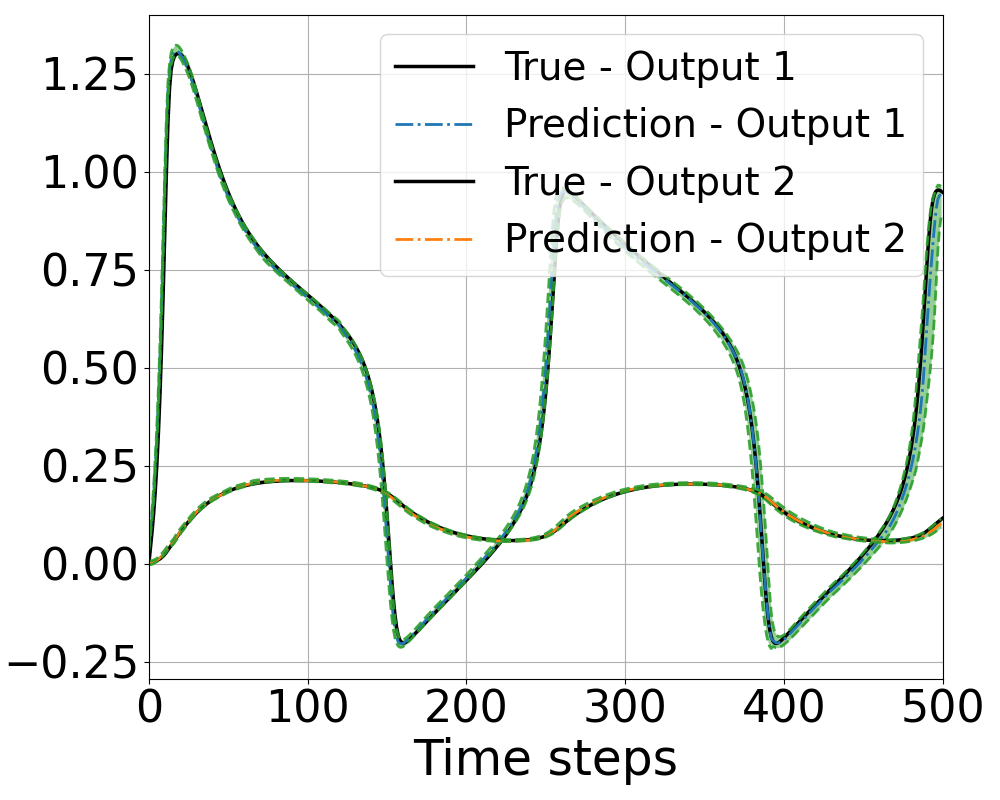}
\label{fig:FHN_6_q}
\end{subfigure}
\caption{FitzHugh-Nagumo model: the predicted solution (P50) with the quantile interval (P10, P90) and the reference solution at the testing parameter samples listed in~\Cref{tab:error_FHN}: the first three samples correspond to the figures (from left to right) on the top, respectively and the next three samples correspond to the figures (from left to right) in the bottom. }
\label{fig:FHN_sol_q}
\end{figure}
\begin{table}
\small
\centering
\caption{The FitzHugh–Nagumo model: mean relative error in~\labelcref{eq:err_ind} for 6 testing cases.}
\label{tab:error_FHN}
\begin{tabular}{@{}lcc@{}}
\toprule
Testing parameter samples & & \\
$\boldsymbol{\mu}^*_k = (\varepsilon^*_k, c^*_k)^T$, $k=1,\ldots,6$& $\epsilon_{v}(\boldsymbol{\mu}^*_k)$ & $\epsilon_{w}(\boldsymbol{\mu}^*_k)$ \\
\midrule
$(0.0125,\,0.0458)$ & 0.0594 & 0.0304 \\
$(0.0275,\,0.0375)$ & 0.0160 & 0.0173 \\
$(0.0375,\,0.0542)$ & 0.0221 & 0.0125 \\
$(0.0225,\,0.0292)$ & 0.0212 & 0.0205 \\
$(0.0325,\,0.0625)$ & 0.0188 & 0.0134 \\
$(0.0175,\,0.0708)$ & 0.0509 & 0.0189 \\
\midrule
\textit{Averaged over all 6 testing cases} & 0.0314 & 0.0188 \\
\bottomrule
\end{tabular}
\end{table}

\subsection{Coupled electrochemical kinetics and diffusion model}
This example describes the Ferrocyanide oxidation reaction in a coupled electrochemical kinetics and diffusion model from~\cite{morKocetal11}. The focus is on the reaction kinetics influenced by two parameters: the angular frequency of the input signal ($\omega$) and the rotation rate of the rotating disc electrode ($\omega_r$). The governing equations include the two PDEs in~\labelcref{eq:fc_fick}, which describe the mass transport of the oxidant and reductant according to the second Fick's law, assuming that convective terms can be neglected.
\begin{align}
\label{eq:fc_fick}
\frac{\partial C_\mathrm{ox}(z, t, \boldsymbol{\mu})}{\partial t}&=D_\mathrm{ox} \frac{\partial^{2} C_\mathrm{ox}(z, t, \boldsymbol{\mu})}{\partial z^{2}}, \nonumber \\
\frac{\partial C_\mathrm{red}(z, t, \boldsymbol{\mu})}{\partial t}&=D_\mathrm{red} \frac{\partial^{2} C_\mathrm{red}(z, t, \boldsymbol{\mu})}{\partial z^{2}},
\end{align}
where the subscript `ox' represents oxidant and `red' represents reductant in the reaction. The vector of parameters is $\boldsymbol{\mu} = (\omega, \omega_r)^T$. Both reduction and oxidation can occur in the system. The terms $C_{\ell}, D_{\ell}, \, \ell= \mathrm{ox}, \mathrm{red}$, correspond to the concentration and diffusion coefficient, respectively. Additionally, an ODE for the charge balance is provided in~\labelcref{eq:fc_charge},
\begin{equation}
\label{eq:fc_charge}
C_{dl} \frac{dE(t, \boldsymbol{\mu})}{dt} = J(E(t,\boldsymbol{\mu}), u(\omega,t)) - F_a \cdot r(t,\boldsymbol{\mu}),
\end{equation}
where $C_{dl}$ is the double-layer capacitance, $J(E(t, \boldsymbol{\mu}), u(\omega,t))$ is the cell current density depending on the electrode potential $E(t, \boldsymbol{\mu})$, and $F_a$ is the Faraday constant. The external input signal is the potential of the voltage depending on the angular frequency, i.e.,
\begin{equation}
u(\omega, t) = E_0 + A \cos\omega t,
\end{equation}
where $E_0=0.107$, the amplitude $A = 0.0536$, and the angular frequency $\omega \in [10\pi , 1000\pi]$\si{rad/s}. The boundary condition is given as
$$D_\ell \frac{\partial C_\ell (z, t, \boldsymbol{\mu})}{\partial z}\vert_{z=0}=\pm f_r(t, \boldsymbol{\mu}), \; \ell={\mathrm{red}, \mathrm{ox}},$$
where $f_r(t, \boldsymbol{\mu})$ is an exponential function of the system unknown variable $E(t)$, defined as
\begin{equation}
\label{eq:ferr_reactionrate}
f_r(t,\boldsymbol{\mu})= k_r\left\{c_\mathrm{red}(t, \boldsymbol{\mu}) e^{\beta g \cdot \left(E(t,\boldsymbol{\mu})-E_\mathrm{r}\right)} 
-c_\mathrm{ox}(t, \boldsymbol{\mu}) e^{-(1-\beta)g \cdot \left(E(t,\boldsymbol{\mu})-E_\mathrm{r}\right)}\right\}.
\end{equation}
Here, $c_\ell(t, \boldsymbol{\mu})=\frac{C_{\mathrm{\ell}}(0, t, \boldsymbol{\mu})}{C_{\mathrm{\ell}, \infty}(t, \omega_r)}, \ell={\mathrm{red}, \mathrm{ox}}$, 
$g=F/RT$ with $T$ being the temperature, and R being the universal gas constant. The variable $C_{\mathrm{\ell}, \infty}(t, \omega_r)$ is the bulk concentration changing with time and the rotation rate $\omega_r$. The reaction rate $f_r(t, \boldsymbol{\mu})$ in~\labelcref{eq:ferr_reactionrate} is computed by Butler-Volmer kinetics, where $E_r$ is the equilibrium electrode potential and $k_r=1.15\times 10^{-4}$ is the reaction rate constant.

After discretization in space, the total number of DOFs is $N=2003$. The resulting discretized model forms a system of ODEs as described in~(\ref{eq:fom}), with $E=I$, the identity matrix. The nonlinear vector $F(x(t,\boldsymbol{\mu}), \boldsymbol{\mu})$ is an exponential function of the state $E(t, \boldsymbol{\mu})$. The boundary conditions also contribute to $F(x(t, \boldsymbol{\mu}), \boldsymbol{\mu})$. There are three outputs: the current density $J(E(t, \boldsymbol{\mu}), u(\omega,t))$, the concentration of the oxidant $C_\mathrm{ox}(0, t, \boldsymbol{\mu})$ and the concentration of the reductant $C_\mathrm{red}(0, t, \boldsymbol{\mu})$ on the boundary. Finally, the output vector is $\boldsymbol{y}(\boldsymbol{\mu}, t)=(J(E, u(\omega,t)), C_\mathrm{ox}(0, t, \boldsymbol{\mu}), C_\mathrm{red}((0, t, \boldsymbol{\mu}))^T$. Using the relation between the ordinary frequency $f$ and the angular frequency $\omega$, $\omega=2\pi f$, we sample $f$ to determine the samples of $\omega$.

To train SE-TFT, we take samples in a 2D parameter space $[5, 500] \times [500, 5000]$ using Latin hypercube sampling, resulting in $n_p = 450$ parameter samples with $\boldsymbol{\mu}_i = (f_i, (\omega_r)_i)^T, i = 1, \ldots, 450$, where the training, validation, and testing data include $n_{p_{train}} = 400$, $n_{p_{validate}} = 40$ and $n_{p_{test}} = 10$ parameter samples, respectively. The simulation time interval for each parameter sample is defined as 5 periods with each period containing 100 evenly distributed time steps, resulting in a total of 500 time instances in the time interval $[0, 5/f_i]$ changing according to the sample value $f_i, i=1,\ldots, 450$. This means that although each parameter sample corresponds to a time sequence with the same number (500) of time steps, the time interval from which the time sequence is obtained is different and is determined by the frequency sample value. The higher the frequency, the shorter the time interval, resulting in time sequences changing at both high and low frequencies. Using the samples of $\boldsymbol{\mu}$, the output at the first time instance $\boldsymbol{y}(\boldsymbol{\mu}, t_1)=(J(E, u(\omega,t_1)), C_\mathrm{ox}(0, t_1, \boldsymbol{\mu}), C_\mathrm{red}(0, t_1, \boldsymbol{\mu}))^T$ as the past observed output, and the known input signal $u(\omega, t)$ at all the time instances, we construct the data file in~\labelcref{eq:motft_data} to train SE-TFT. 

%During the training phase, SE-TFT is trained to predict outputs at the subsequent 499 time instances. The training process involves 5000 epochs with no early stopping. Other hyperparameters are listed in \Cref{tab:hyper_Ferr}. In the testing phase, given only the outputs at the initial time $t_1$ for the testing parameter samples, SE-TFT can predict the outputs at the next 499 time instances $\{t_2,\ldots, t_{n_t}\}$ corresponding to testing parameter samples in one step, which takes 4.5\si{s}. 

During the training phase, SE-TFT is trained with MAE loss to predict output values at the subsequent 499 time instances. The training process involves 5000 epochs with no early stopping. Other hyperparameters are listed in \Cref{tab:hyper_Ferr}. During the testing phase, given only the initial condition for any testing parameter sample, SE-TFT can predict the multiple outputs at the next 499 time instances $\{t_2,\ldots, t_{n_t}\}$ in one step. %The whole prediction takes 4.5\si{s} for all the testing cases. 
\begin{table}
\small
\centering
\caption{Coupled electrochemical kinetics and diffusion model: the hyperparameters for training SE-TFT.}  \label{tab:hyper_Ferr}
  \begin{tabular}{|c|c|c|c|c|c|} \hline
   Learning rate & Dropout rate & Number of heads & $d_{model}$ & Minibatch size & Max gradient norm \\ \hline
 0.0005  & 0.2 & 4 & 160 & 64 & 0.01\\ \hline
\end{tabular}
\end{table}

The prediction results for three representative testing cases are shown in~\Cref{fig:ferr_sol_q}. The mean relative errors $\epsilon_{y}(\boldsymbol{\mu})$ between the predictions and the reference solutions for all 10 testing cases are listed in~\Cref{tab:error_Ferr}. Averaged over the 10 testing cases, the values of the mean relative error are $\frac{1}{10}\sum_{k = 1}^{10}\epsilon_{J}(\boldsymbol{\mu}_k) = 0.0017$, $\frac{1}{10}\sum_{k = 1}^{10}\epsilon_{C_\mathrm{ox}}(\boldsymbol{\mu}_k) = 0.0068$ and $\frac{1}{10}\sum_{k = 1}^{10}\epsilon_{C_\mathrm{red}}(\boldsymbol{\mu}_k) = 0.0024$ for the three output predictions, respectively. %The quantile losses $\mathcal{L}_{q=0.5}$ and $\mathcal{L}_{q=0.9}$ on these testing parameter samples are 0.0143 and 0.0117, respectively. Overall, these results demonstrate that SE-TFT accurately predicts multiple outputs at both high and low frequencies $(\omega)$ and at various rotating rates of the rotating disc electrode $(\omega_r)$. Most of the errors are reduced around $50 \%$ when applying MAE loss to train SE-TFT. The corresponding values of the mean error are $\frac{1}{10}\sum_{k = 1}^{10}\epsilon_{J}(\boldsymbol{\mu}_k) = 0.0017$, $\frac{1}{10}\sum_{k = 1}^{10}\epsilon_{C_\mathrm{ox}}(\boldsymbol{\mu}_k) = 0.0068$ and $\frac{1}{10}\sum_{k = 1}^{10}\epsilon_{C_\mathrm{red}}(\boldsymbol{\mu}_k) = 0.0022$. Based on the results presented, SE-TFT accurately predicts multiple outputs at both high and low frequencies $(\omega)$ and at various rotating rates of the rotating disc electrode $(\omega_r)$.
\begin{figure}
\begin{subfigure}{0.33\textwidth}
\includegraphics[width=0.9\linewidth]{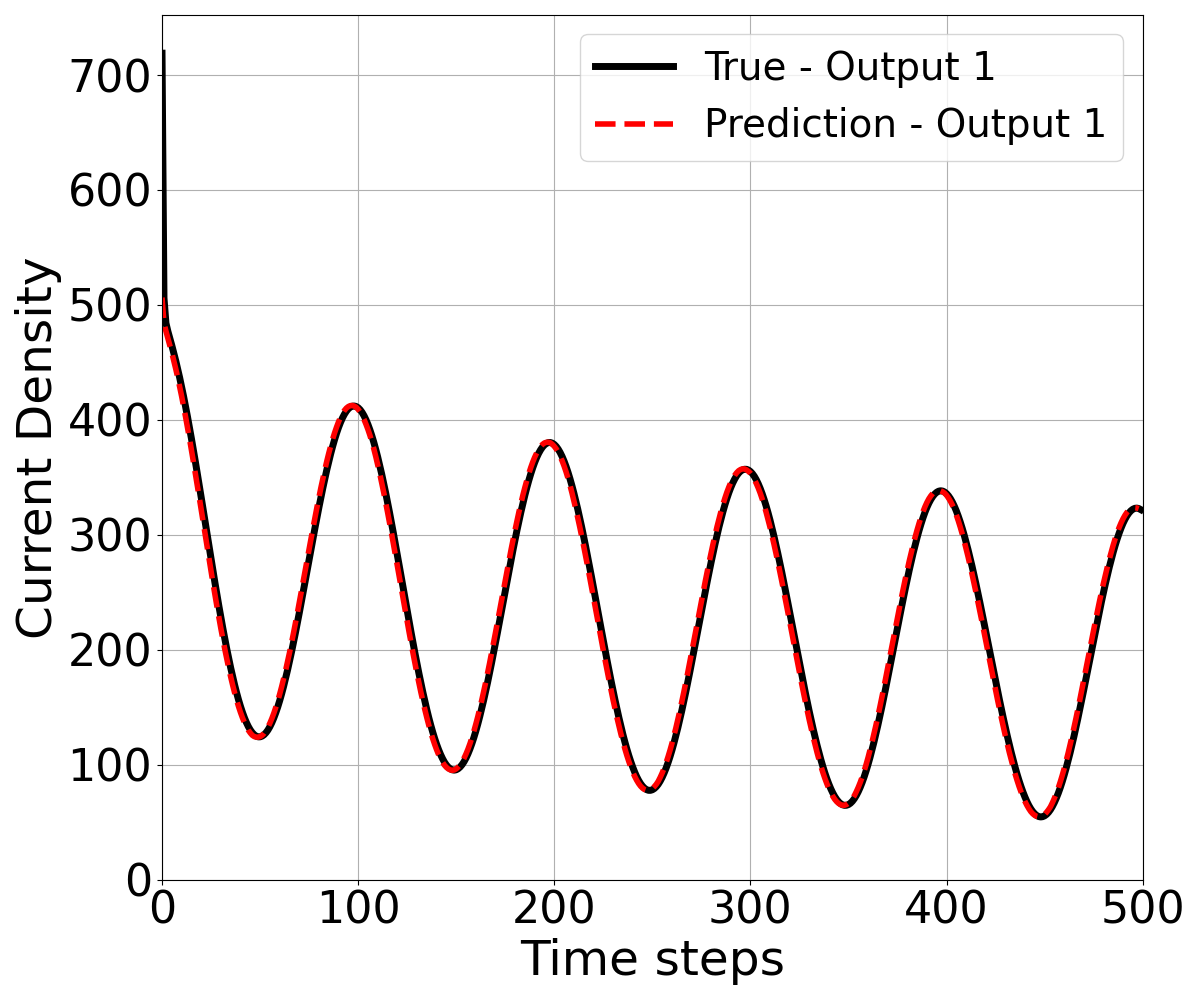}
\label{fig:Ferr_1_i_q}
\end{subfigure}
\begin{subfigure}{0.33\textwidth}
\includegraphics[width=0.9\linewidth]{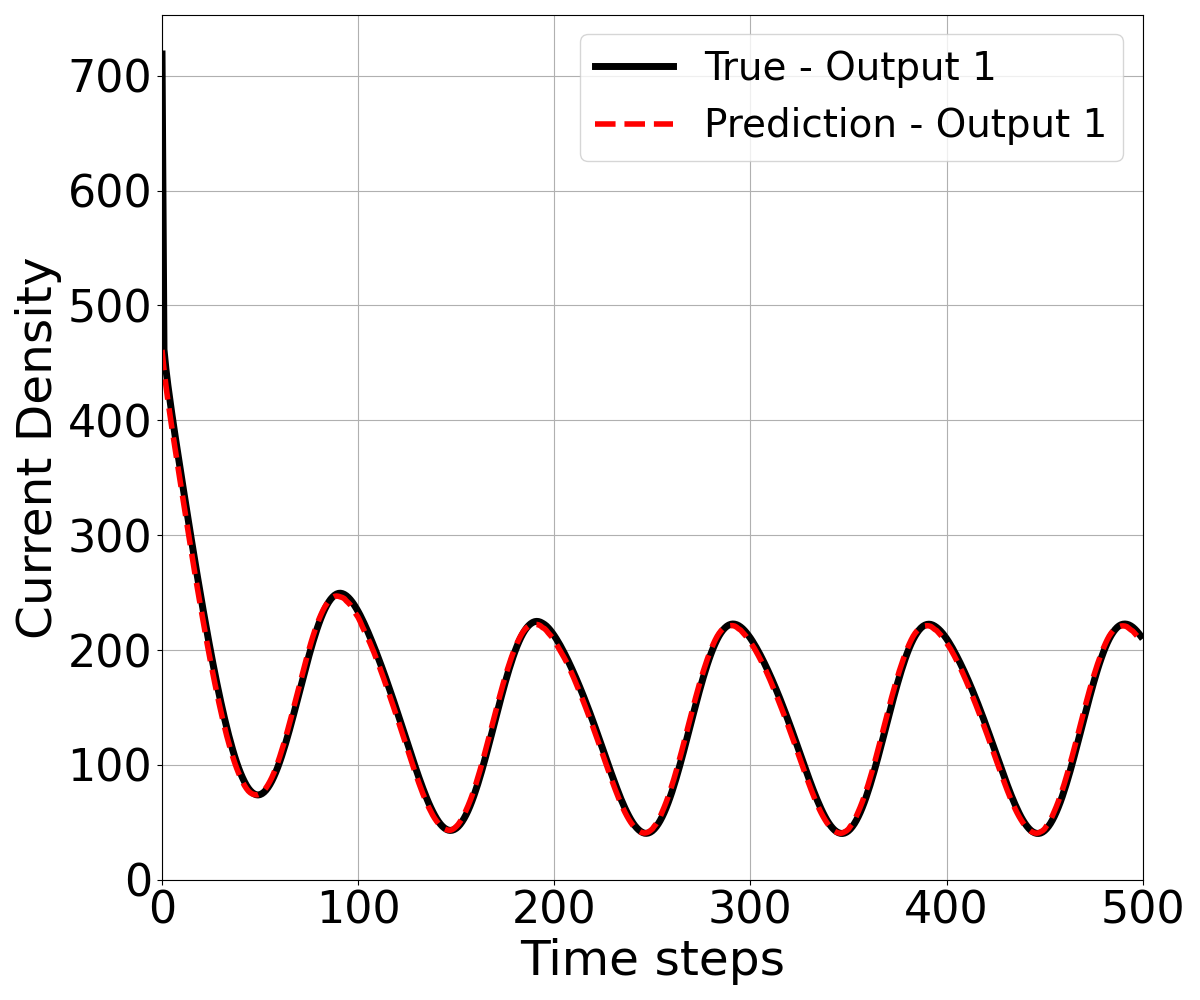}
\label{fig:Ferr_2_i_q}
\end{subfigure}
\begin{subfigure}{0.33\textwidth}
\includegraphics[width=0.9\linewidth]{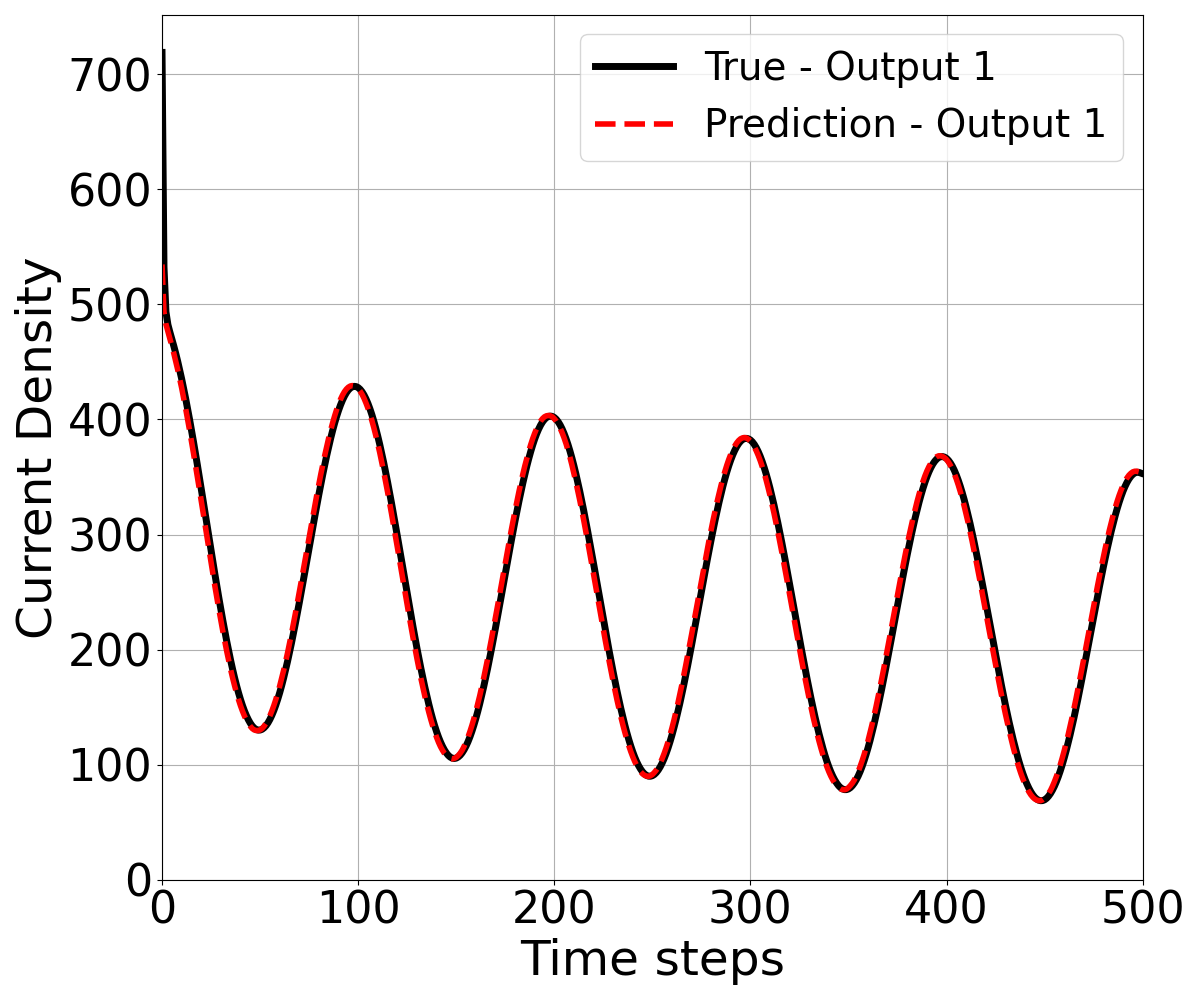}
\label{fig:Ferr_7_i_q}
\end{subfigure}
\begin{subfigure}{0.33\textwidth}
\includegraphics[width=0.9\linewidth]{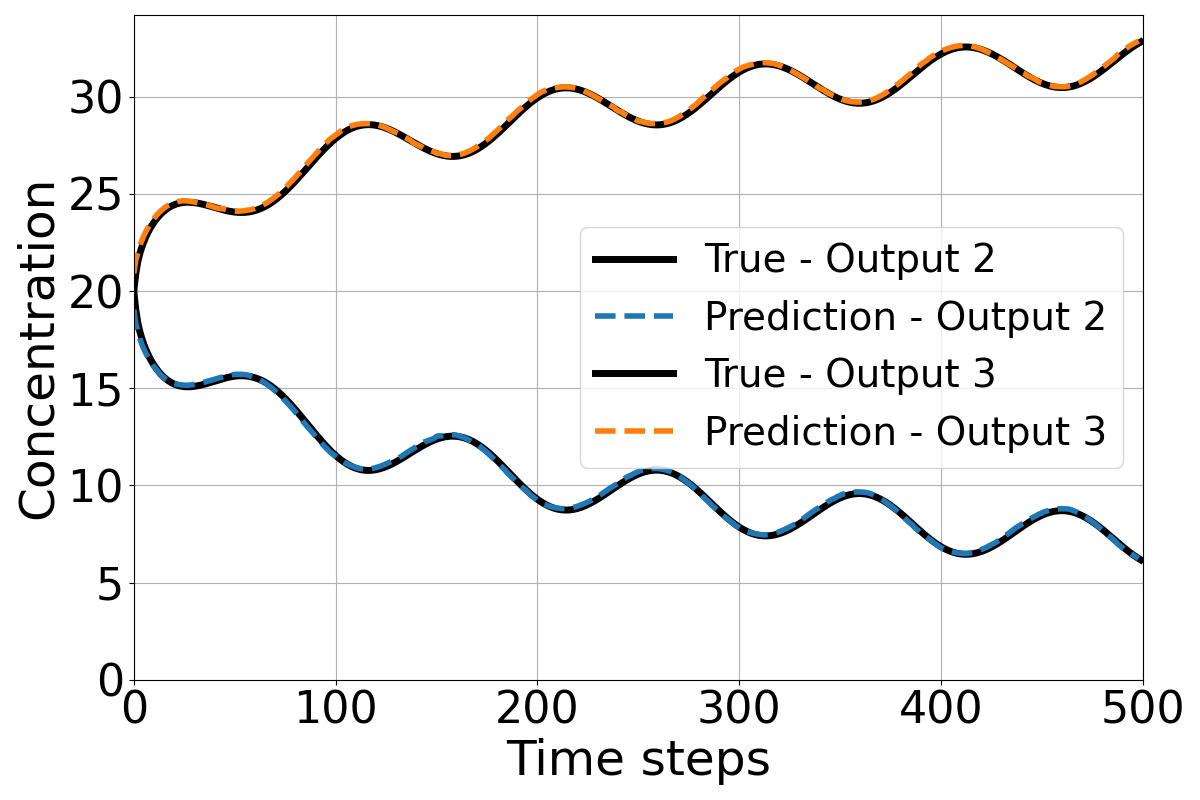}
\label{fig:Ferr_1_con_q}
\end{subfigure}
\begin{subfigure}{0.33\textwidth}
\includegraphics[width=0.9\linewidth]{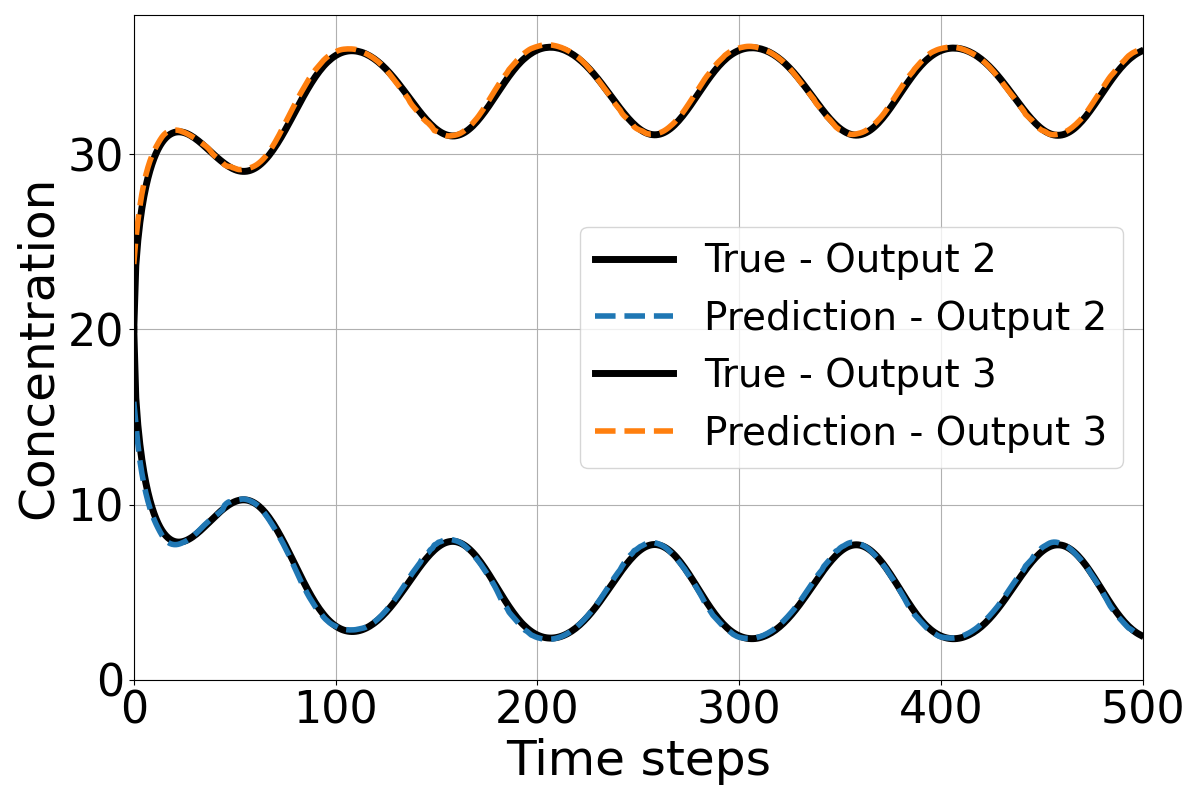}
\label{fig:Ferr_2_con_q}
\end{subfigure}
\begin{subfigure}{0.33\textwidth}
\includegraphics[width=0.9\linewidth]{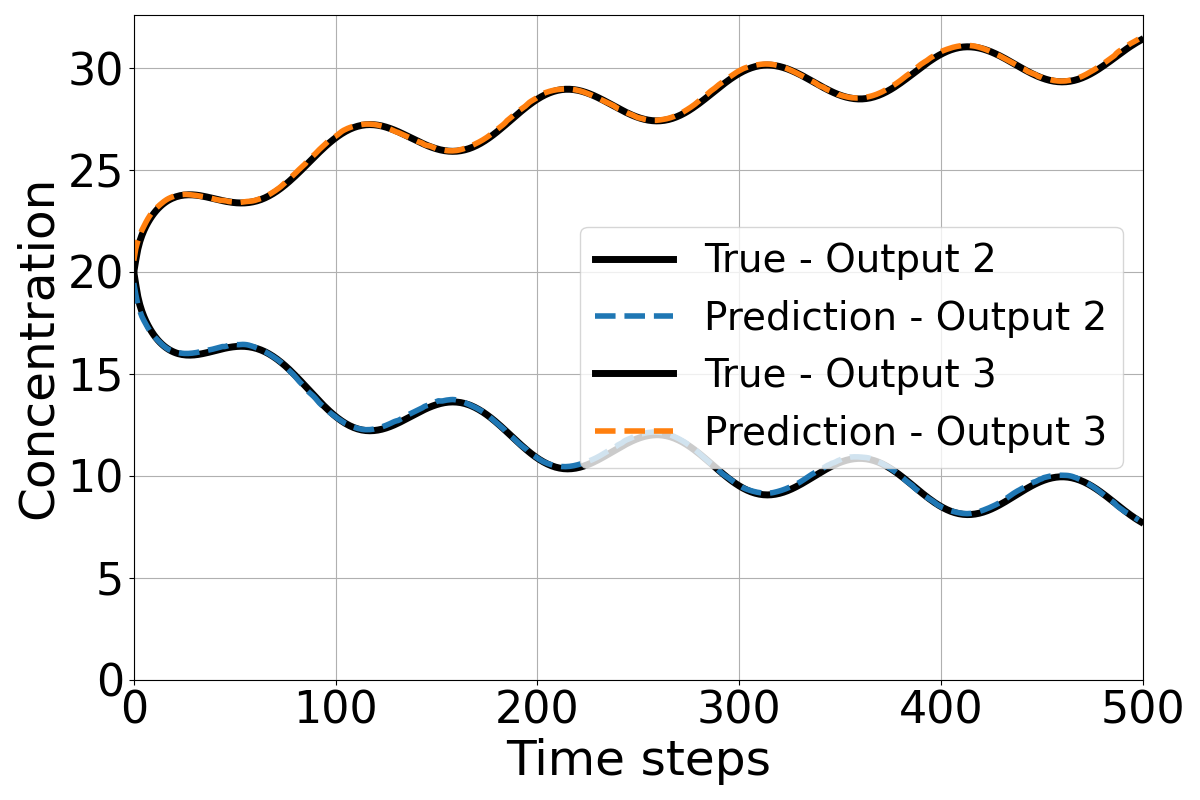}
\label{fig:Ferr_7_con_q}
\end{subfigure}
\caption{Coupled electrochemical kinetics and diffusion model: the predicted solution and the reference solution. The upper part shows the predicted output of the current density. The lower part is the predicted output of concentration for the reduced and oxidized form. These results correspond to three testing parameter samples $\boldsymbol{\mu}_k, \, k = 1,2,7$, in~\Cref{tab:error_Ferr}, respectively.}
\label{fig:ferr_sol_q}
\end{figure}

\begin{table}
\small
\centering
\caption{Coupled electrochemical kinetics and diffusion model: mean relative error $\epsilon_{y}(\boldsymbol{\mu}^*_k)$ in~\labelcref{eq:err_ind} for 10 testing cases.}
\label{tab:error_Ferr}
\begin{tabular}{@{}lccc@{}}
\toprule
Testing parameter samples \\ 
$\boldsymbol{\mu}^*_k = (f_k, (\omega_r)_k)^T$, $k=1,\ldots,10$ 
& $\epsilon_{J}(\boldsymbol{\mu}^*_k)$ 
& $\epsilon_{C_{\mathrm{ox}}}(\boldsymbol{\mu}^*_k)$ 
& $\epsilon_{C_{\mathrm{red}}}(\boldsymbol{\mu}^*_k)$ \\
\midrule
(295.5641, 4649.3851) & 0.0013 & 0.0060 & 0.0025 \\
(29.9218, 3922.5145)  & 0.0053 & 0.0124 & 0.0032 \\
(91.1469, 1902.9533)  & 0.0018 & 0.0070 & 0.0024 \\
(223.2115, 758.4492)  & 0.0010 & 0.0048 & 0.0026 \\
(488.7329, 1190.1299) & 0.0018 & 0.0078 & 0.0011 \\
(304.2217, 1658.7809) & 0.0012 & 0.0056 & 0.0025 \\
(447.3810, 3445.1554) & 0.0016 & 0.0069 & 0.0016 \\
(193.9293, 2437.0589) & 0.0009 & 0.0045 & 0.0028 \\
(122.6000, 3156.9897) & 0.0012 & 0.0074 & 0.0025 \\
(386.5754, 4219.9428) & 0.0010 & 0.0054 & 0.0026 \\
\midrule
\textit{Averaged over all 10 testing cases} & 0.0017 & 0.0068 & 0.0024 \\
\bottomrule
\end{tabular}
\end{table}

\subsection{The interpretability analysis of SE-TFT}
\label{sec:Interpretable}
In this section, we present the interpretability of SE-TFT from two perspectives: variable importance obtained from the variable selection layers, and the temporal-spatial correlation obtained from the block-wise masked attention weight matrices.

Variable importance weights measure the influences of parameters, past inputs and past observed outputs, as well as the future inputs, on the final prediction of the time sequence. \Cref{tab:varselect} shows the variable importance weights for the second and the third example, respectively, where the sum of the weights of all parameters is $1.0$, while the sum of the past input weight and the observed output weight in the past time period corresponding to each output is $1.0$. Finally, we only have a single input signal for both numerical examples, so the weights of the future input equal to $1.0$. More specifically, the weights of the parameters indicate their influences on prediction of all the outputs $\{y_1, \ldots, y_{n_o}\}$, while the weights of the inputs and outputs in the past period indicate their influences on each individual output prediction. For the FitzHugh-Nagumo model, the parameter $\varepsilon$ shows a slightly greater influence on the final predicted output sequence than the parameter $c$ does. In the past time period $[0, t_1]$, the observed outputs contribute more than the known input signal $i_0(t)$ does for each individual output prediction, which remain unchanged from one output prediction to another. As for the coupled electrochemical kinetics and diffusion model, the frequency $f$ appears as a more significant parameter than the rotation rate $\omega_r$ after the static covariate selection process. The prediction of the output $(y_1)$ corresponding to the current density is more sensitive to the known inputs than to the observed ones. The reverse phenomenon happens to $y_2$ and $y_3$. Moreover, for both $y_2$ and $y_3$, the relative contributions of the external input $u(\omega, t)$ and the observed outputs remain unchanged. Such insights cannot be detected by the original TFT model, highlighting an advantage of the proposed SE-TFT framework. For the Lorenz–63 model, no parameters and no external inputs are involved in the numerical tests. Accordingly, we only consider the dynamics changing with random initial conditions. When training SE-TFT for this example, we take the initial conditions as the only observed outputs in the data set. As a result, the importance weights are reduced to a single importance weight related to the observed outputs, which is always $1.0$.

\begin{table}
\small
\caption{Variable importance for the first testing case of the FitzHugh-Nagumo model and for the first testing case of the coupled electrochemical kinetics and diffusion model.}
\label{tab:varselect}
\centerline{
\begin{subtable}{0.45\textwidth}
\hspace{-0.5cm}
\begin{tabular}{@{}lcc@{}}
\toprule
\textbf{}  & \multicolumn{2}{c}{\textbf{Importance weights}} \\ \midrule
{\ul \textbf{Parameters}} & \multicolumn{2}{c}{} \\ 
$\varepsilon$  & \multicolumn{2}{c}{0.5492} \\
$c$            & \multicolumn{2}{c}{0.4508}\\ \midrule
{\ul \textbf{Past}}  & \multicolumn{2}{c}{}\\
 & $y_1$ & $y_2$\\
Known inputs $i_0(t)$   & 0.3975 & 0.3975\\
Observed outputs   & 0.6205 & 0.6205\\ \midrule
{\ul \textbf{Future}} & \multicolumn{2}{c}{} \\ 
Known inputs $i_0(t)$  & \multicolumn{2}{c}{1.0} \\ \bottomrule
\end{tabular}
\caption{FitzHugh-Nagumo model}
\end{subtable} \hfill
\begin{subtable}{0.45\textwidth}
\hspace{-0.5cm}
\begin{tabular}{@{}lccc@{}}
\toprule
\textbf{}  & \multicolumn{3}{c}{\textbf{Importance weights}} \\ \midrule
{\ul \textbf{Parameters}} & \multicolumn{3}{c}{}\\
$f$  & \multicolumn{3}{c}{0.8365}\\
$\omega_{r}$  & \multicolumn{3}{c}{0.1635}\\ \midrule
{\ul \textbf{Past}}  & \multicolumn{3}{c}{} \\
& $y_1$ & $y_2$ & $y_3$\\
Known inputs $u(\omega, t)$   & 0.9133 & 0.1179& 0.1179\\
Observed outputs   & 0.0867 &0.8821 & 0.8821\\ \midrule
{\ul \textbf{Future}} & \multicolumn{3}{c}{} \\ 
Known inputs $u(\omega, t)$  & \multicolumn{3}{c}{1.0} \\ \bottomrule
\end{tabular}
\caption{Coupled electrochemical kinetics and diffusion model}
\end{subtable}}
\end{table}
The attention weight matrices $\widetilde{\boldsymbol{A}}$ from the trained SE-TFT reveal informative hidden connections inside output sequences of the dynamical system. Attention weight matrices corresponding to two testing cases for the Lorenz-63 model are shown in~\Cref{fig:mask_lorenz}. For each testing case, we display the attention weight matrix of the feature sequence up to the $27$-th time instance, in order to save space. The zoomed-in $9 \times 9$ region in the left figure of~\Cref{fig:mask_lorenz_zoom} shows the attention correlations among the features within three time instances. Each time instance corresponds to a $3 \times 3$ block, where direct interactions between features corresponding to different outputs are clearly observed. In particular, darker colors in the $(i,j)$-th entry indicate stronger interactions between the features of the $i$-th and $j$-th outputs. Such information might be neglected by existing transformer models that rely on concatenated multi-head attention. Moreover, these interactions among multiple outputs cannot be identified or explored by the original TFT model either. Additionally, \Cref{fig:mask_lorenz_t} illustrates temporal patterns of attention weights for the three predicted outputs at the 450th time step, revealing the spatio-temporal dependencies within the forecasting horizon. For instance, the left plot shows how the previous 128 time steps of all the three outputs contribute to predicting $y_1(t_{n_t},\boldsymbol{\mu}_1)$. In this plot, the three curves correspond to the contributions from three different outputs at the past 128 time steps, $\{y_1(t_{1},\boldsymbol{\mu}_1),\ldots,y_1(t_{n_t-1},\boldsymbol{\mu}_1)\}$, $\{y_2(t_{1},\boldsymbol{\mu}_1),\ldots,y_2(t_{n_t-1},\boldsymbol{\mu}_1)\}$, and $\{y_3(t_{1},\boldsymbol{\mu}_1),\ldots,y_3(t_{n_t-1},\boldsymbol{\mu}_1)\}$. Similarly, the middle and right plots visualize the corresponding attention weights for predicting $y_2(t_{n_t},\boldsymbol{\mu}_1)$ and $y_3(t_{n_t},\boldsymbol{\mu}_1)$, respectively. Markedly different temporal patterns are observed in the additional example shown in \Cref{fig:mask_lorenz_t1}, which relates to another testing case with a totally different initial condition. Overall, no strong or persistent temporal dependencies can be identified, as attention weights are dispersed irregularly across prior time steps. These observations reflect the underlying complexity of cross interactions in chaotic dynamics, demonstrating the capacity of SE-TFT to capture such behavior.

\begin{figure}
\begin{subfigure}{0.99\textwidth}
\centering
\includegraphics[scale = 0.63]{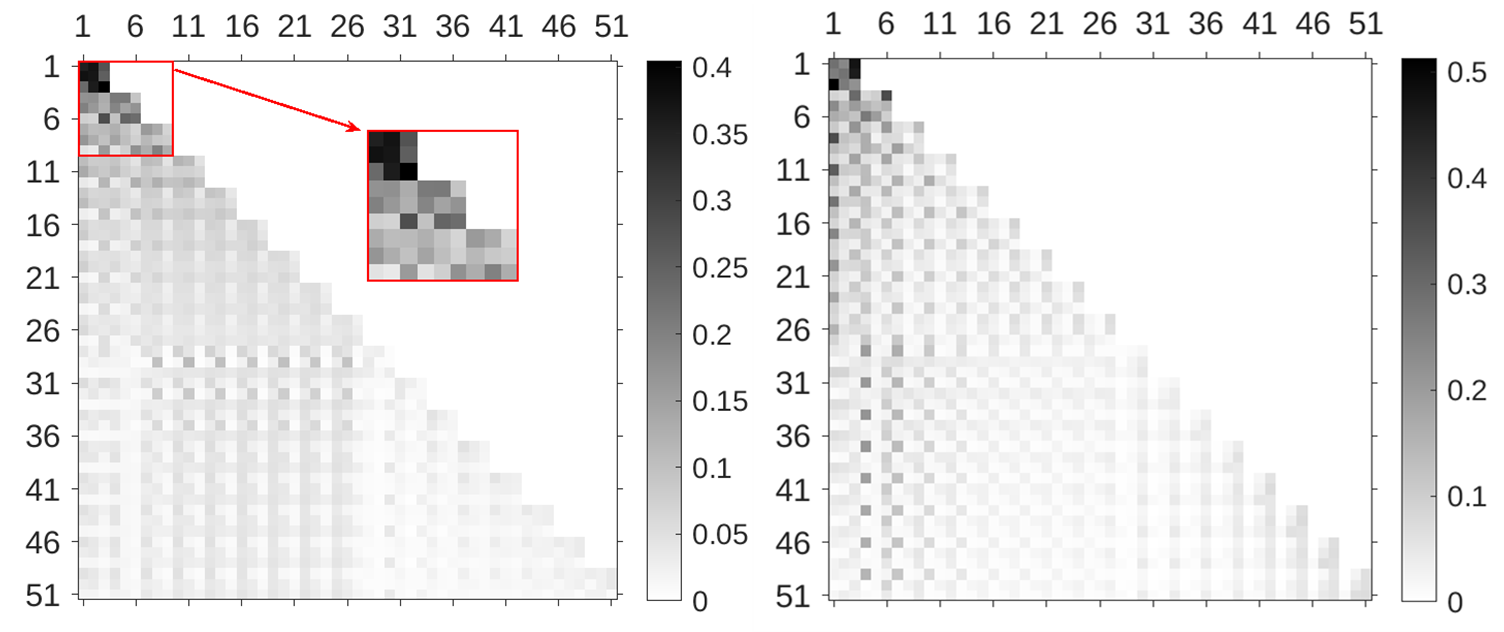}
\caption{}
\label{fig:mask_lorenz_zoom}
\end{subfigure}
\begin{subfigure}{0.99\linewidth}
  \centering
\begin{subfigure}{0.31\textwidth}
\includegraphics[width=0.93\linewidth]{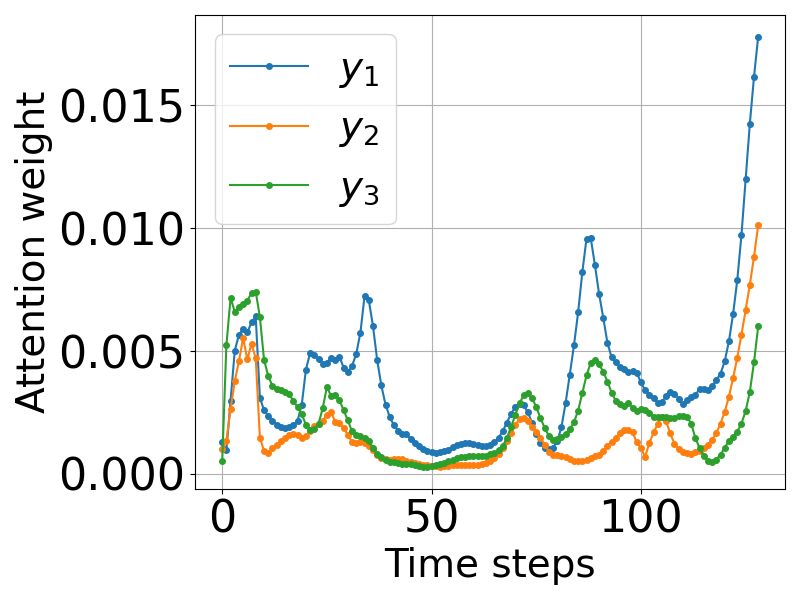}
\end{subfigure}
\begin{subfigure}{0.31\textwidth}
\includegraphics[width=0.93\linewidth]{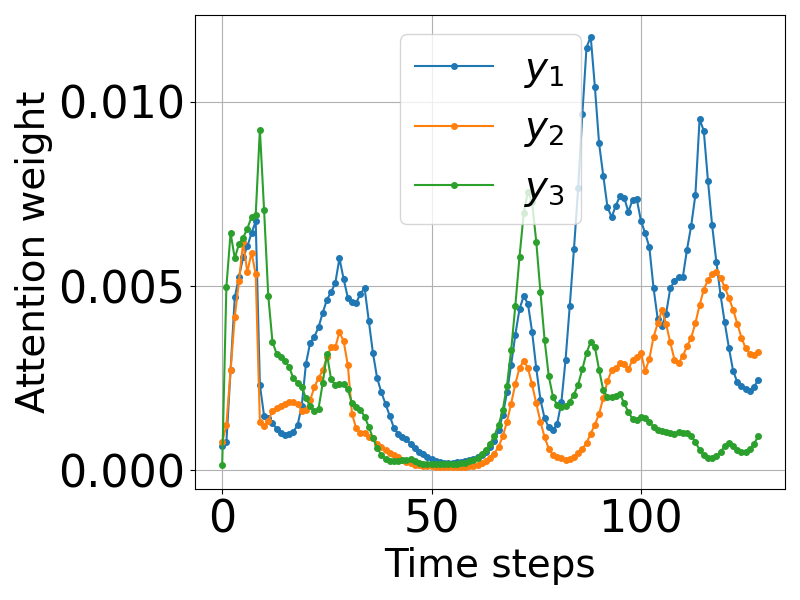}
\end{subfigure}
\begin{subfigure}{0.31\textwidth}
\includegraphics[width=0.93\linewidth]{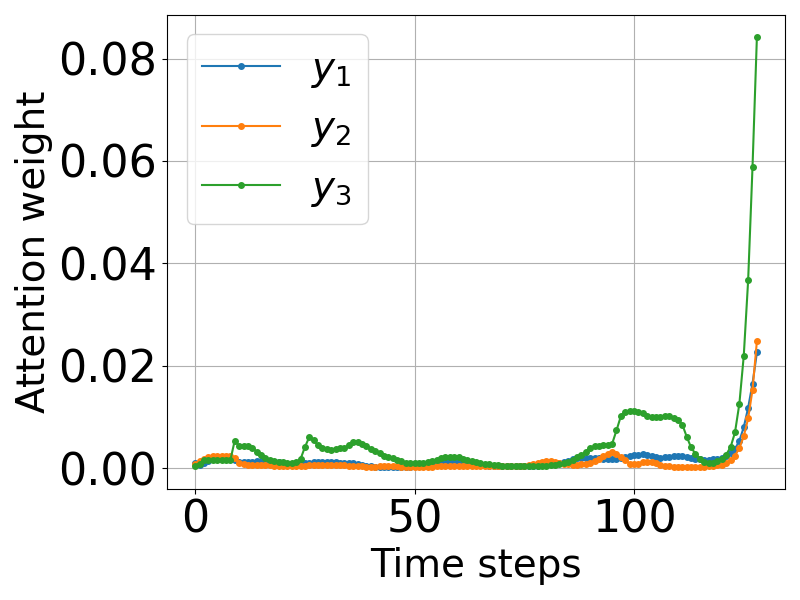}
\end{subfigure}
\caption{}
\label{fig:mask_lorenz_t}
\end{subfigure}
\begin{subfigure}{0.99\linewidth}
  \centering
\begin{subfigure}{0.31\textwidth}
\includegraphics[width=0.93\linewidth]{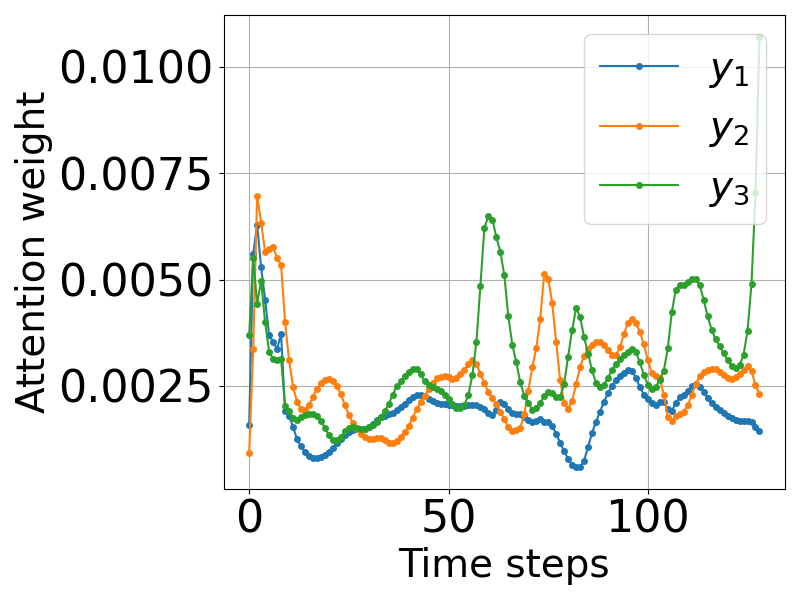}
\end{subfigure}
\begin{subfigure}{0.31\textwidth}
\includegraphics[width=0.93\linewidth]{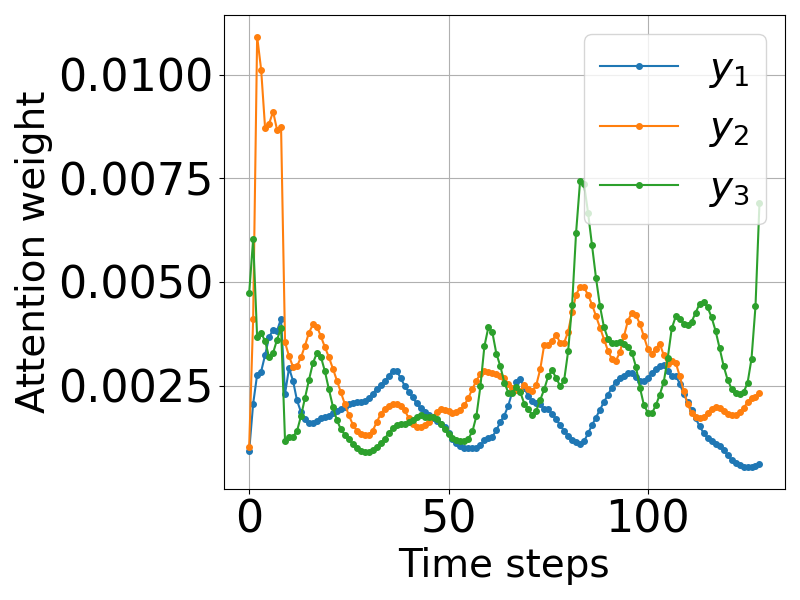}
\end{subfigure}
\begin{subfigure}{0.31\textwidth}
\includegraphics[width=0.93\linewidth]{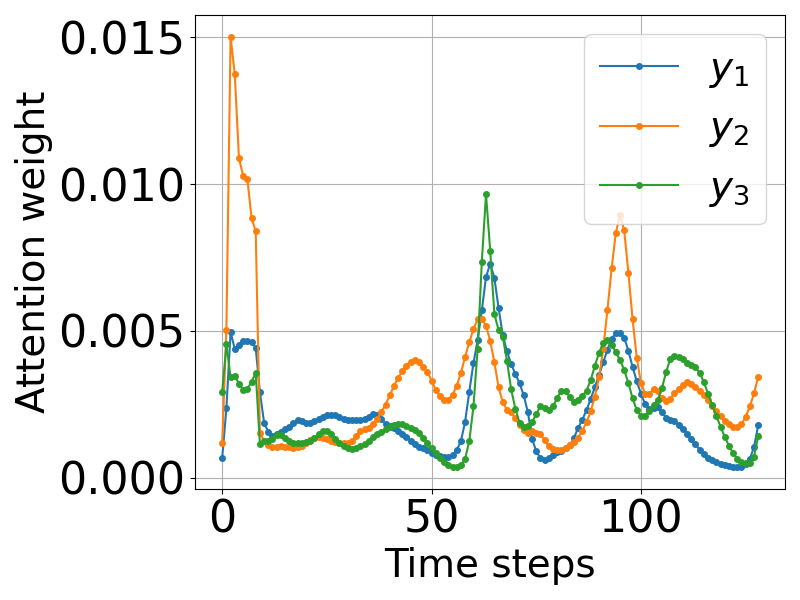}
\end{subfigure}
\caption{}
\label{fig:mask_lorenz_t1}
\end{subfigure}
\caption{The Lorenz-63 model: (a) The upper block (containing time steps up to the $17$-th time instance) of the block-wise masked attention weight matrices $\widetilde{\boldsymbol{A}}$ for predicting the outputs $\boldsymbol{y}(t, \boldsymbol{\mu})$ at (upper left)~the $450$-th and (upper right)~the $257$-th testing cases. Both results are obtained from the SE-TFT trained with MAE loss. (b) Temporal pattern of attention weights for predicting each of the three outputs at the final time step of the prediction horizon (the $450$-th testing case). (c) Temporal pattern of attention weights for predicting each of the three outputs at the final time step of the prediction horizon (the $257$-th testing case).}
\label{fig:mask_lorenz}
\end{figure}

%The zoomed-in region in~\Cref{fig:mask_FHN}, contains the correlation weights at five time instances around $0.35$ in the time interval~$[0, 0.5]$. Each time instance corresponds to a $2 \times 2$ block matrix showing the correlations between the features of the two outputs at that specific time instance. Inside the zoomed-in matrix, superdiagonal entries show a dark color, indicating that at these time instances, the feature of the second output strongly influences the feature of the first output. This may explain the sudden change in the amplitude of both outputs within the time interval $[0.3, 0.5]$ in~\Cref{fig:FHN_sol_q}. 
%\Cref{fig:mask_FHN} presents the prediction of two testing cases based on the testing parameter set $ \{\varepsilon, c\}=\{0.0125, 0.0458\}$ and $\{\varepsilon, c\}=\{0.0275, 0.0375\}$ for the FitzHugh-Nagumo model. Similarly, different weights in the attention weight matrices link to the correlations between the outputs evolving over time. Moreover, the two weight matrices present similar trends. The long-term influence of the features corresponding to the two outputs at the first time instance on the prediction of the outputs in the future time period, is clearly observed from the first two columns of each weight matrix.
\Cref{fig:mask_Ferr} illustrates the upper block of the attention weight matrices for the coupled electrochemical kinetics and diffusion model. In this numerical test, SE-TFT identifies clear spatio-temporal patterns. The weight matrix contains the correlation information up to the $33$-rd time instance. Based on this attention weight matrix, the feature associated with the current density is partially decoupled from the other two features corresponding to the concentrations. In the rows associated with current density, the strongest correlations (dark entries) occur in those entries linked to the current density itself and its own history. This behavior is further illustrated in \Cref{fig:mask_Ferr_t}. In the left picture, the prediction of $y_1(t_{n_t},\boldsymbol{\mu}_1)$ is dominated almost exclusively by past values of $y_1$ over the previous 499 time steps, as indicated by periodic spikes with increasing amplitude. This suggests that $y_1$ is the primary factor in forecasting $y_1$. In contrast, the middle and right figures show that $y_2(t_{n_t},\boldsymbol{\mu}_1)$ and $y_3(t_{n_t},\boldsymbol{\mu}_1)$ draw attention from all three outputs, indicating stronger cross-variable coupling. The lower plots in \Cref{fig:mask_Ferr_t1}, corresponding to a different testing parameter sample, exhibit a similar dynamic pattern with periodic structures and the distinguishable dominant influences on each output. These patterns differ from those observed in \Cref{fig:mask_lorenz_t} and \Cref{fig:mask_lorenz_t1} and can be captured only by SE-TFT.
\begin{figure}
\centering
\begin{subfigure}{0.99\textwidth}
\centering
\includegraphics[scale=0.3]{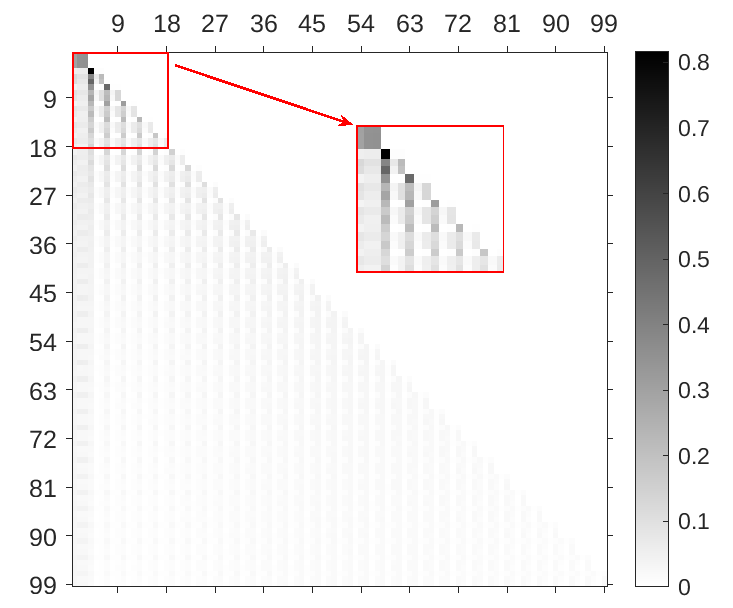}\caption{}
\label{fig:mask_Ferr}
\end{subfigure}
\begin{subfigure}{0.99\linewidth}
  \centering
\begin{subfigure}{0.31\textwidth}
\includegraphics[width=0.93\linewidth]{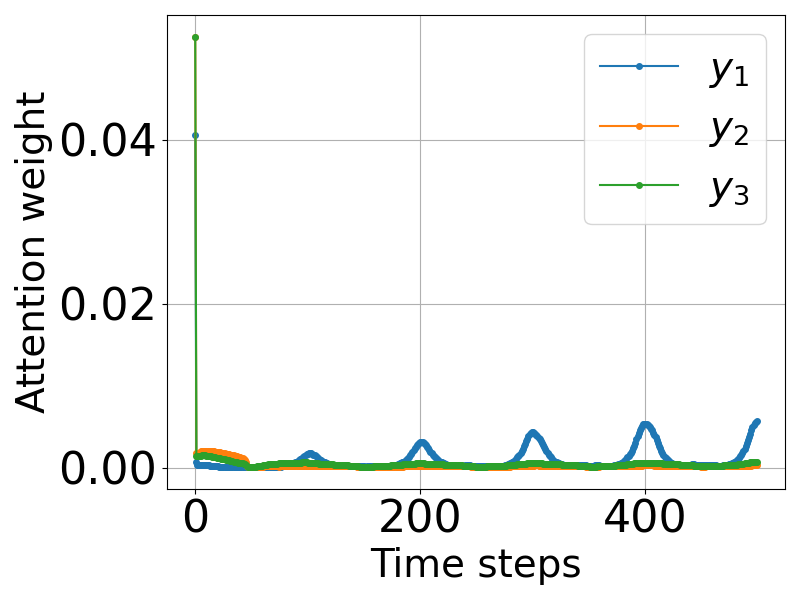}
\end{subfigure}
\begin{subfigure}{0.31\textwidth}
\includegraphics[width=0.93\linewidth]{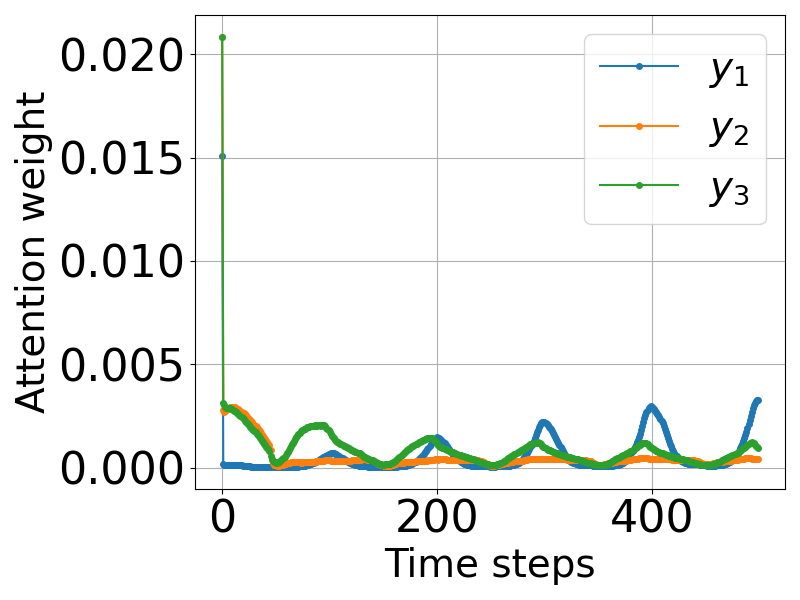}
\end{subfigure}
\begin{subfigure}{0.31\textwidth}
\includegraphics[width=0.93\linewidth]{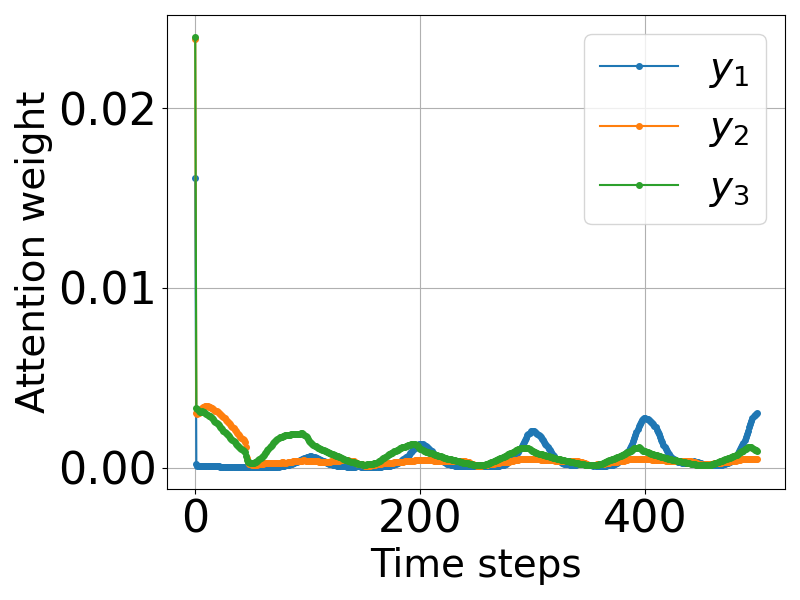}
\end{subfigure}
\caption{}
\label{fig:mask_Ferr_t}
\end{subfigure}
\begin{subfigure}{0.99\linewidth}
  \centering
\begin{subfigure}{0.31\textwidth}
\includegraphics[width=0.93\linewidth]{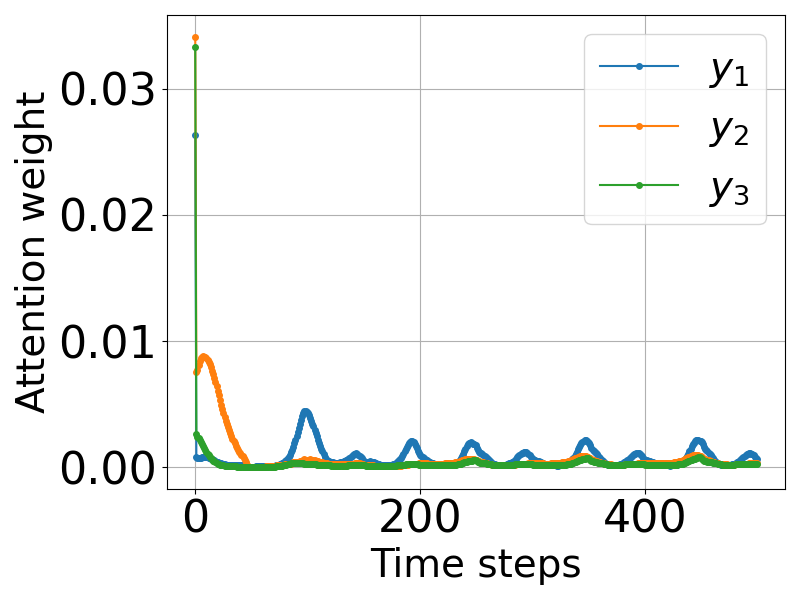}
\end{subfigure}
\begin{subfigure}{0.31\textwidth}
\includegraphics[width=0.93\linewidth]{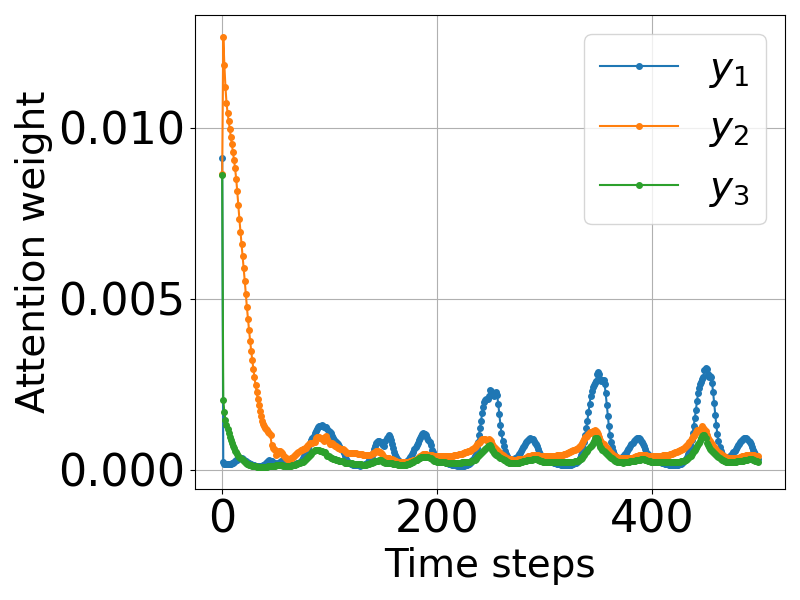}
\end{subfigure}
\begin{subfigure}{0.31\textwidth}
\includegraphics[width=0.93\linewidth]{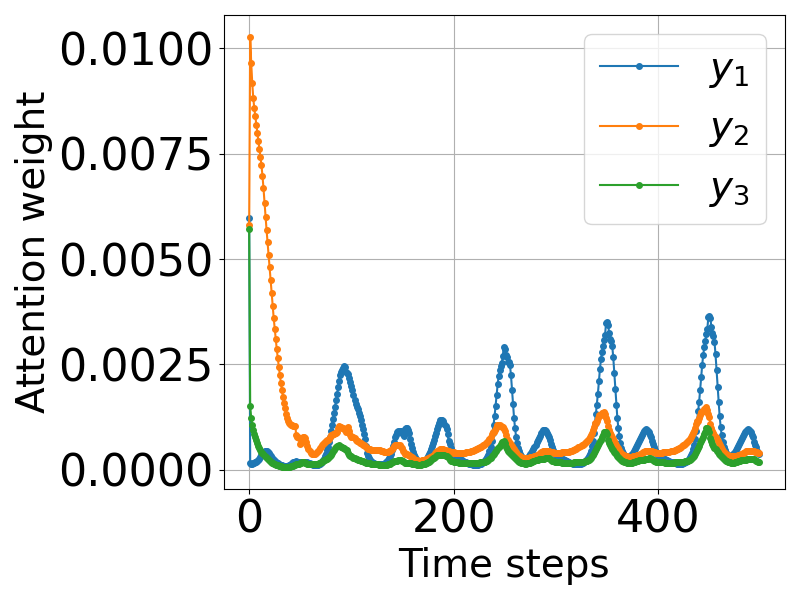}
\end{subfigure}
\caption{}
\label{fig:mask_Ferr_t1}
\end{subfigure}
\caption{Coupled electrochemical kinetics and diffusion model: (a) The upper block (containing time steps up to $33$ time instances) of the block-wise masked attention weight matrix $\widetilde{\boldsymbol{A}}$ for predicting the output $\boldsymbol{y}(t, \boldsymbol{\mu})$ at the testing parameter sample set $\boldsymbol{\mu}:\{f, w_r\}=\{29.9218, 3922.5145\}$. The results are obtained from the SE-TFT trained with MAE loss. (b)(c) Temporal pattern of attention weights for predicting each of the three outputs at the final time step of the prediction horizon (the first and the second testing case).}
\end{figure}

\section{Conclusions}
\label{sec:conclusion}

We extended the TFT to a multi-output setting, yielding the spatially-enhanced TFT. The proposed framework is particularly efficient for predicting multiple outputs of parametric dynamical systems with time and/or parameter varying external inputs. Using two loss functions to target different tasks, SE-TFT supports both actual value predictions and quantile forecasting for multiple outputs. In addition, the interpretability of TFT is extended by introducing an interpretable block-wise masked attention weight matrix for SE-TFT, which reveals the internal (spatial) relations between the multiple outputs evolving along time. These attention patterns are visualized for each numerical example. Moreover, we show the influences of the parameters, inputs, and past outputs on the final prediction by listing their weights computed from variable selection layers. In fact, the variable selection layer is a unique feature of TFT as a transformer model. It further improves the interpretability of TFT, and in turn, that of SE-TFT. As a result, the spatially-enhanced interpretability distinguish SE-TFT from existing transformer-based methods. As compared to the original TFT model, the proposed SE-TFT now supports longer-term rollouts for multiple targets predictions via autoregression.

The numerical results show that SE-TFT can be trained effectively with a moderate amount of data. In the testing phase, SE-TFT requires only the initial condition (or a very short history over the first few time steps) to generate long-horizon forecasts at unseen parameter samples. The resulting predictions are sufficiently accurate, and the spatially‑enhanced interpretability provides a reasonable and meaningful basis for explaining hidden features of the underlying systems.

\section*{Data and code availability}
\addcontentsline{toc}{section}{Data and code availability}
Data and code are available on Zenodo - \url{https://doi.org/10.5281/zenodo.18772646}.

\section*{CRediT authorship statement}
\addcontentsline{toc}{section}{CRediT authorship statement}

Conceptualization: Shuwen Sun, Lihong Feng, Peter Benner; Methodology:  Shuwen Sun, Lihong Feng; Formal analysis and investigation: Shuwen Sun; Writing - original draft preparation: Shuwen Sun; Writing - review and editing:  Shuwen Sun, Lihong Feng, Peter Benner; Supervision: Peter Benner.

\section*{Declarations of interest}
\addcontentsline{toc}{section}{Declarations of interest}
The authors declare that they have no known competing financial interests or personal relationships that could have appeared to influence the work reported in this paper.

\section*{Acknowledgments}
We thank Dr. Tanja Vidakovic-Koch and Dr. Tamara Mili\v{c}i\'{c} from Max Planck Institute for Dynamics of Complex Technical Systems, Germany for providing the coupled electrochemical kinetics and diffusion model. This research was partly supported by the International Max Planck Research School for Advanced Methods in Process and Systems Engineering (IMPRS ProEng), Magdeburg, Germany.

\bibliographystyle{abbrv}
\bibliography{refs}

@STRING{CompMethAppMechEng = {Comp. Meth. Appl. Mech. Eng.}}

@STRING{IEEETransCADCS = {IEEE Trans. Comput.-Aided Design Integr. Circuits Syst.}}

@STRING{IEEETransMTT = {{IEEE} Trans. Microw. Theory Techn.}}

@STRING{JSciComput = {J. Sci. Comput.}}

@STRING{SIAMPub = {{SIAM} Publications}}

@STRING{Springer = {Springer-Verlag}}

@Book{morAnt05,
  author =       {Antoulas, A.~C.},
  title =        {Approximation of Large-Scale Dynamical Systems},
  publisher =    SIAMPub,
  year =         2005,
  volume =       6,
  series =       {Adv. Des. Control},
  address =      {Philadelphia, PA},
  isbn =         9780898715293,
  doi =          {10.1137/1.9780898718713}
}

@book{morBenetala21,
	title        = {{M}odel {O}rder {R}eduction. Volume 1: {S}ystem- and {D}ata-{D}riven {M}ethods and {A}lgorithms},
	year         = 2021,
	publisher    = {De Gruyter},
	doi          = {10.1515/9783110499001},
	editor       = {Benner, P. and Grivet-Talocia, S. and Quarteroni, A. and Rozza, G. and Schilder, W. Silveira, L. M.}
}

@book{morBenetalb21,
	title        = {{M}odel {O}rder {R}eduction. Volume 2: {S}napshot-{B}ased {M}ethods and {A}lgorithms},
	year         = 2021,
	publisher    = {De Gruyter},
	doi          = {10.1515/9783110671490},
	editor       = {Benner, P. and Grivet-Talocia, S. and Quarteroni, A. and Rozza, G. and Schilder, W. Silveira, L. M.}
}

@book{morBenetalc21,
	title        = {{M}odel {O}rder {R}eduction. Volume 3: Applications},
	year         = 2021,
	publisher    = {De Gruyter},
	doi          = {10.1515/9783110499001},
	editor       = {Benner, P. and Grivet-Talocia, S. and Quarteroni, A. and Rozza, G. and Schilder, W. Silveira, L. M.}
}

@InCollection{morKut21,
  Title                    = {Machine Learning Methods for Reduced Order Modeling},
  Author                   = {J Nathan Kutz},
  Booktitle                = {Model Order Reduction and Applications},
  Publisher                = {Springer Cham},
editor =		 {Falcone, M. and Rozza, G. },
series={Lecture Notes in Mathematics},
volume=2328,
  Year                     = {2021},
pages={201-228},
  url={ DOI:10.1007/978-3-031-29563-8_4}
}

@article{morBarF22,
author={J. Barnett and C. Farhat},
title={Quadratic approximation manifold for mitigating the {Kolmogorov} barrier in nonlinear projection-based model order reduction},
journal = {J. Comput. Phys.},
volume=464,
pages={111348},
doi = {10.1016/j.jcp.2022.111348},
year=2022
}

@article{morBarFM23,
author={J. Barnett and C. Farhat and Y. Maday},
title={Neural-network-augmented projection-based model order reduction for mitigating the {Kolmogorov} barrier to reducibility},
journal={J. Comput. Phys.},
volume=492,
pages={112420},
doi = {10.1016/j.jcp.2023.112420},
year=2023
}

@Article{morBenGW15,
  author =       {Benner, P. and Gugercin, S. and Willcox, K.},
  title =        {A Survey of Model Reduction Methods for Parametric Systems},
  journal =      {{SIAM} Rev.},
  year =         2015,
  volume =       57,
  pages =        {483--531},
  doi =          {10.1137/130932715}
}

@article{morBonCGetal24,
author={C. Bonneville and Y. Choi and D. Ghosh and J. L Belof},
title={{GPLaSDI}: Gaussian process-based interpretable latent space dynamics identification through deep autoencoder},
journal = CompMethAppMechEng,
volume = {418},
pages = {116535},
year = {2024},
doi = {10.1016/j.cma.2023.116535}
}

@techreport{mlCalKL24,
  title={Continuum Attention for Neural Operators},
  author={Calvello, E. and Kovachki, N.~B. and Levine, M.~E. and Stuart, A.~M.},
  year={2024},
  type={{arXiv e-prints: 2406.06486}},
  url =	{https://arxiv.org/abs/2406.06486},
  note = {cs.LG}
}

@article{morCheFRetal23,
	title        = {Inf-Sup-Constant-Free State Error Estimator for Model Order Reduction of Parametric Systems in Electromagnetics},
	author       = {Sridhar Chellappa and Lihong Feng and Valentin de la Rubia and Peter Benner},
    year={2023},
    volume={71},
    number={11},
    pages={4762-4777},
	journal      = IEEETransMTT,
	publisher    = {IEEE},
	doi          = {10.1109/TMTT.2023.3288642}
}

@inproceedings{mlCheRBetal18,
 author = {Chen, R. T.~Q. and Rubanova, Y. and Bettencourt, J. and Duvenaud, D.~K},
 title = {Neural Ordinary Differential Equations},
 booktitle = {Advances in Neural Information Processing Systems},
 volume = {31},
 pages = {6572–6583},
 publisher = {Curran Associates, Inc.},
 year = 2018, 
 doi = {10.5555/3327757.3327764}
}

@inproceedings{mlCirGYetal22,
title={{Triformer: Triangular, Variable-Specific Attentions for Long Sequence Multivariate Time Series Forecasting}},
author={R. Cirstea and C. Guo and B. Yang and T. Kieu and X. Dong and S. Pan},
booktitle = {Proceedings of the Thirty-First International Joint Conference on Artificial Intelligence, {IJCAI-22}},
publisher = {International Joint Conferences on Artificial Intelligence Organization},
  editor    = {Lud De Raedt},
  pages     = {1994--2001},
  year      = {2022},
  month     = {7},
  note      = {Main Track},
  doi       = {10.24963/ijcai.2022/277}
}

@article{morConGFetal23,
title={Reduced order modeling of parametrized systems through autoencoders and {SINDy} approach: continuation of periodic solutions},
author={P. Conti and G. Gobat and S. Fresca and A. Manzoni and A. Frangi},
journal=CompMethAppMechEng,
volume=411,
pages=116072,
year=2023,
doi = {10.1016/j.cma.2023.116072}
}

@article{morConGMetal23,
author={P. Conti and M. Guo and A. Manzoni and J. S. Hesthaven},
title={Multi-fidelity surrogate modeling using long short-term memory networks},
journal=CompMethAppMechEng,
volume=404,
pages={115811},
year=2023,
doi = {10.1016/j.cma.2022.115811}
}

@inproceedings{mlDroMC22,
title={{TACTiS: Transformer-Attentional Copulas for Time Series}},
author={A. Drouin and \'{E}. Marcotte and N. Chapados},
booktitle = 	 {Proceedings of the 39th International Conference on Machine Learning},
pages = 	 {5447--5493},
year = 	 {2022},
editor = 	 {Chaudhuri, Kamalika and Jegelka, Stefanie and Song, Le and Szepesvari, Csaba and Niu, Gang and Sabato, Sivan},
volume = 	 {162},
series = 	 {Proceedings of Machine Learning Research},
month = 	 {17--23 Jul},
publisher =    {PMLR},
url = 	 {https://proceedings.mlr.press/v162/drouin22a.html}
}

@article{morDuFPetal20,
author={S. Dutta and M. W. Farthing and E. Perracchione and G. Savant and Mario Putti},
title = {A greedy non-intrusive reduced order model for shallow water equations},
journal = {J. Comput. Phys.},
volume = {439},
pages = {110378},
year = {2021},
issn = {0021-9991},
doi = {10.1016/j.jcp.2021.110378}
}

@Article{morFelF95,
  author =       {P. Feldmann and R.~W. Freund},
  title =        {Efficient linear circuit analysis by {P}ad{\'e} approximation via the {L}anczos process},
  journal =      IEEETransCADCS,
  year =         1995,
  volume =       14,
  pages =        {639--649},
  doi={10.1109/43.384428}
}

@Article{morFen23,
  author =       {Feng, L.},
  title =        {Predicting output responses of nonlinear dynamical systems
                  with parametrized inputs using {LSTM}},
  journal =      {IEEE J. Multiscale Multiphysics Comput. Tech.},
  year =         2023,
  volume =       8,
  pages =        {97--107},
  doi =          {10.1109/JMMCT.2023.3242044}
}

@Article{FitzHug,
author={R. FitzHugh},
title={Impulses and Physiological States in Theoretical Models of Nerve Membrane},
journal={Biophys. J.},
volume=1,
number=6,
pages={445--466},
year=1961,
doi = {10.1016/S0006-3495(61)86902-6}
}

@Article{morFreDM21,
  author =      {Fresca, S. and Ded{\`e}, L. and Manzoni, A.},
  title =       {A Comprehensive Deep Learning-Based Approach to Reduced Order
                 Modeling of Nonlinear Time-Dependent Parametrized {PDEs}},
  journal =     JSciComput,
  volume =      {87},
  pages =       {61},
  year =        {2021},
  doi =         {10.1007/s10915-021-01462-7}
}

@Article{morFreM22,
  author =      {Fresca, S. and Manzoni, A.},
  title =       {{POD-DL-ROM}: Enhancing deep learning-based reduced order models for nonlinear parametrized {PDEs} by proper orthogonal decomposition},
  journal = CompMethAppMechEng,
  volume = {388},
  pages = {114181},
  year = {2022},
  doi = {10.1016/j.cma.2021.114181}
}

@article{mlGenZ22,
title = {Transformers for modeling physical systems},
author = {N. Geneva and N. Zabaras},
journal = {Neural Networks},
volume = {146},
pages = {272-289},
year = {2022},
issn = {0893-6080},
doi = {10.1016/j.neunet.2021.11.022}
}

@InProceedings{mlHaoWS23,
  title = 	 {{GNOT}: A General Neural Operator Transformer for Operator Learning},
  author =       {Hao, Z. and Wang, Z. and Su, H. and Ying, C. and Dong, Y. and Liu, S. and Cheng, Z. and Song, J. and Zhu, J.},
  booktitle = 	 {Proceedings of the 40th International Conference on Machine Learning},
  pages = 	 {12556--12569},
  year = 	 {2023},
  editor = 	 {Krause, Andreas and Brunskill, Emma and Cho, Kyunghyun and Engelhardt, Barbara and Sabato, Sivan and Scarlett, Jonathan},
  volume = 	 {202},
  series = 	 {Proceedings of Machine Learning Research},
  month = 	 {23--29 Jul},
  publisher =    {PMLR}
}

@article{morHeCFetal23,
author={X. He and Y. Choi and W. D Fries and J. L Belof and J.-S. Chen},
title={{gLaSDI}: Parametric physics-informed greedy latent space dynamics identification},
journal={J. Comput. Phys.},
volume = {489},
pages = {112267},
year = {2023},
issn = {0021-9991},
doi = {10.1016/j.jcp.2023.112267}
}

@article{morDuaH24,
title={Non-intrusive data-driven reduced-order modeling for time-dependent parametrized problems},
author={J. Duan and J. S. Hesthaven},
journal={J. Comput. Phys.},
volume=497,
pages={112621},
year=2024,
doi = {10.1016/j.jcp.2023.112621}
}

@Article{morKasGH20,
author={M. Kast and M. Guo and J. S. Hesthaven},
title={A non-intrusive multifidelity method for the reduced order modeling of nonlinear problems},
journal= CompMethAppMechEng,
volume=364,
year=2020,
doi = {10.1016/j.cma.2020.112947}
}

@article{mlKoGKetal24,
  title={Learning nonlinear operators in latent spaces for real-time predictions of complex dynamics in physical systems},
  author={Kontolati, K. and Goswami, S. and Karniadakis, G.~E. and Shields, M.~D.},
  journal={Nat. Commun.},
  volume={15},
  number={1},
  pages={5101},
  year={2024},
  publisher={Nature Publishing Group UK London},
  doi={10.1038/s41467-024-49411-w}
}

@techreport{mlliKAetal20b,
  title={{Fourier Neural Operator} for Parametric Partial Differential Equations},
  author={Li, Z. and Kovachki, N. and Azizzadenesheli, K. and Liu, B. and Bhattacharya, K. and Stuart, A. and Anandkumar, A.},
  year={2020},
  type={{arXiv e-prints: 2010.08895}},
  url =	{https://arxiv.org/abs/2010.08895},
  note = {cs.LG}
}

@techreport{mlliKAetal20a,
  title={{Neural Operator}: {Graph Kernel Network for Partial Differential Equations}},
  author={Li, Z. and Kovachki, N. and Azizzadenesheli, K. and Liu, B. and Stuart, A. and Anandkumar, A.},
  year={2020},
  type={{arXiv e-prints: 2003.03485}},
  url =	{https://arxiv.org/abs/2003.03485},
  note = {cs.LG}
}

@Article{mlLimALetal21,
title={{Temporal Fusion Transformers} for interpretable multi-horizon time series forecasting},
author={B. Lim and S. \"O. Arık and N. Loeff and T. Pfister},
journal={Int. J. Forecast.},
volume = {37},
number = {4},
pages = {1748-1764},
year = {2021},
issn = {0169-2070},
doi = {10.1016/j.ijforecast.2021.03.012}
}

@INPROCEEDINGS{mlLinKR21,
author={Y. Lin and I. Koprinska and M. Rana},
booktitle = {2021 IEEE International Conference on Data Mining (ICDM) },
title = {{ SSDNet: State Space Decomposition Neural Network for Time Series Forecasting }},
year = {2021},
volume = {},
ISSN = {},
pages = {370-378},
doi = {10.1109/ICDM51629.2021.00048}
}

@inproceedings{mlLiuWWetal22,
author={Y. Liu and H. Wu and J. Wang and M.
Long},
title = {Non-stationary transformers: exploring the stationarity in time series forecasting},
year = {2024},
isbn = {9781713871088},
publisher = {Curran Associates Inc.},
booktitle = {Proceedings of the 36th International Conference on Neural Information Processing Systems},
articleno = {718},
numpages = {13},
series = {NIPS '22}
}

@article{mlluJPetal21,
  title={Learning nonlinear operators via {DeepONet} based on the universal approximation theorem of operators},
  author={Lu, L. and Jin, P. and Pang, G. and Zhang, Z. and Karniadakis, G.~E.},
  journal={Nat. Mach. Intell.},
  volume={3},
  number={3},
  pages={218--229},
  year={2021},
  publisher={Nature Publishing Group},
  doi={10.1038/s42256-021-00302-5}
}

@article{morMau21,
title={Reduced-order modeling of advection-dominated systems with recurrent neural networks and convolutional autoencoders},
author={R. Maulik and B. Lusch and P. Balaprakash},
journal={Physics of Fluids},
volume=33,
number=3,
pages={037106},
doi={10.1063/5.0039986},
year=2021
}

@Article{MorSD21,
author={Moradi A., Mahvash and Sadrossadat, Sayed Alireza and Derhami, Vali},
title={Long Short-Term Memory Neural Networks for Modeling Nonlinear Electronic Components},
journal={IEEE Trans. Compon. Packag. Technol.},
volume=11,
number=5,
pages={840-847},
year=2021,
doi = {10.1109/TCPMT.2021.3071351}
}

@article{mlOvaKS24,
title = {{ViTO: Vision Transformer-Operator}},
author = {O. Ovadia and A. Kahana and P. Stinis and E. Turkel and D. Givoli and G.~E. Karniadakis},
journal = CompMethAppMechEng,
volume = {428},
pages = {117109},
year = {2024},
issn = {0045-7825},
doi = {10.1016/j.cma.2024.117109},
}

@Article{morPilR90,
  author =       {L.~T. Pillage and R.~A. Rohrer},
  title =        {Asymptotic waveform evaluation for timing analysis},
  journal =      IEEETransCADCS,
  year =         1990,
  volume =       9,
  number =       4,
  pages =        {352--366},
  doi  =  {10.1109/43.45867}
}

@inproceedings{mlShaAMetal23,
author={A. Shabani and A. Abdi and L. Meng and T. Sylvain},
title={{Scaleformer}: Iterative Multi-scale Refining Transformers for Time Series Forecasting},
booktitle={The Eleventh International Conference on Learning Representations},
year={2023}
}

@article{mlSheW22,
author = {L. Shen and Y. Wang},
title = {{TCCT}: Tightly-coupled convolutional transformer on time series forecasting},
journal = {Neurocomputing},
volume = {480},
pages = {131-145},
year = {2022},
issn = {0925-2312},
doi = {10.1016/j.neucom.2022.01.039}
}

@article{morRicS,
	author =		 {A. Solera-Rico1 and C. S. Vila1 and M. \'{A}. G\'{o}mez and Y. Wang and A. Almashjary and S. T. M. Dawson and R. Vinuesa},
	title =		 {$\beta$-Variational autoencoders and transformers for reduced-order modelling of fluid flows},
  journal={Nat. Commun.},
  volume={15},
  number={1},
  pages={1361},
  year={2024},
  doi = {10.1038/s41467-024-45578-4}
}

@article{morSunFB26,
title = {Data-augmented predictive deep neural network: Enhancing the extrapolation capabilities of non-intrusive surrogate models},
author={S. Sun and L. Feng and P. Benner},
journal = {Comp. Meth. Appl. Mech. Eng.},
volume = {450},
pages = {118604},
year = {2026},
issn = {0045-7825},
doi = {10.1016/j.cma.2025.118604}
}

@inproceedings{mlVasSP17,
title = {Attention is all you need},
author = {Vaswani, A. and Shazeer, N. and Parmar, N. and Uszkoreit, J. and Jones, L. and Gomez, A.~N. and Kaiser, \L{}. and Polosukhin, I.},
booktitle = {Proceedings of the 31st International Conference on Neural Information Processing Systems},
publisher = {Curran Associates Inc.},
pages = {6000–6010},
numpages = {11},
series = {NIPS'17},
year = {2017}
}

@article{morKocetal11,
author={T. Vidakovi\'{c}-Koch and V. Pani\'{c} and M. Andri\'{c} and M. Petkovska and K. Sundmacher},
title={Nonlinear
frequency response analysis of the ferrocyanide oxidation kinetics. {Part I}. {A theoretical analysis}},
journal={J. Phys. Chem. C},
year=2011,
volume=115, 
number=8, 
doi={10.1021/jp201297v},
pages={17341–17351}
}

@inproceedings{mlQinTZetal23,
title={Transformers in time series: A survey},
author={Wen, Q. and Zhou, T. and Zhang, C. and Chen, W. and Ma, Z. and Yan, J. and Sun, L.},
booktitle = {Proceedings of the Thirty-Second International Joint Conference on Artificial Intelligence},
year = {2023},
isbn = {978-1-956792-03-4},
doi = {10.24963/ijcai.2023/759},
}

@techreport{mlWuG,
  title={Deep Transformer Models for Time Series Forecasting: The Influenza Prevalence Case},
  author={N. Wu and B. Green and X. Ben and S. O’Banion},
month=Jan,
  year={2020},
  type ={{arXiv} e-prints: 2001.08317},
  url={https://arxiv.org/abs/2001.08317},
note = {cs.LG}
}

@inproceedings{mlZhaY23,
author= {Y. Zhang and J. Yan},
title={Crossformer: Transformer Utilizing Cross-Dimension Dependency for Multivariate Time Series Forecasting},
booktitle={The Eleventh International Conference on Learning Representations},
year={2023},
url={https://openreview.net/forum?id=vSVLM2j9eie}
}

@inproceedings{mlZhoZPetal21,
author={Zhou, H. and Zhang, S. and Peng, J. and Zhang, S. and Li, J. and Xiong, H.
and Zhang, W.},
title     = {Informer: Beyond Efficient Transformer for Long Sequence Time-Series Forecasting},
  booktitle = {The Thirty-Fifth {AAAI} Conference on Artificial Intelligence, {AAAI} 2021, Virtual Conference},
  volume    = {35},
  number    = {12},
  pages     = {11106--11115},
  publisher = {{AAAI} Press},
  year      = {2021},
}

@inproceedings{mlZhoMWetal22,
author={T. Zhou and Z. Ma and Q. Wen and X.
Wang and L. Sun and R. Jin},
title={{FEDformer}: Frequency enhanced
decomposed transformer for long-term series forecasting},
booktitle={Proceedings of the 39th International Conference on Machine Learning (PMLR)},
volume=162,
pages={27268-27286},
year=2022
}

@MISC{morwiki_modFHN,
  author =       {{The MORwiki Community}},
  title =        {{FitzHugh-Nagumo} System},
  howpublished = {\url{http://modelreduction.org/index.php/FitzHugh-Nagumo_System}},
  year =         2018
}

@misc{py_forecast,
  author = {J. Beitner and F. Kiraly},
  title        = {Pytorch Forecasting},
  year         = {2020},
  publisher    = {GitHub},
  journal      = {GitHub repository},
  howpublished = {\url{https://github.com/sktime/pytorch-forecasting}},
  note = {Accessed: 2025-04-29}
}

@article{morBruPK16,
title = {Discovering governing equations from data by sparse identification of nonlinear dynamical systems},
author = {S. L. Brunton  and J. L. Proctor  and J. N. Kutz},
journal = {Proc. Natl. Acad. Sci. U.S.A.},
volume = {113},
number = {15},
pages = {3932-3937},
year = {2016},
doi = {10.1073/pnas.1517384113}
}

@article{morChaLKetal19,
title = {Data-driven discovery of coordinates and governing equations},
author = {K. Champion  and B. Lusch  and J. N. Kutz  and S. L. Brunton},
journal = {Proc. Natl. Acad. Sci. U.S.A.},
volume = {116},
number = {45},
pages = {22445-22451},
year = {2019},
doi = {10.1073/pnas.1906995116}
}

@article{Lorenz1963,
  author  = {Lorenz, E. N.},
  title   = {Deterministic Nonperiodic Flow},
  journal = {J. Atmos. Sci.},
  volume  = {20},
  number  = {2},
  pages   = {130-141},
  year    = {1963}
}
\end{document}